\documentclass{article}

\usepackage[preprint]{neurips_2026}

\usepackage[utf8]{inputenc}
\usepackage[T1]{fontenc}

\usepackage{hyperref}
\usepackage{url}
\usepackage{booktabs}
\usepackage{amsfonts}
\usepackage{nicefrac}
\usepackage{microtype}
\usepackage{xcolor}
\usepackage{algorithm}
\usepackage{algorithmic}
\usepackage{amsmath}
\usepackage{amssymb}
\usepackage{amsthm}
\usepackage{graphicx}
\usepackage{subcaption}
\usepackage{tabularx}
 \usepackage{multirow} 
\hypersetup{colorlinks=true, citecolor=blue, linkcolor=blue, urlcolor=blue}

\newtheorem{theorem}{Theorem}[section]

\newtheorem{proposition}[theorem]{Proposition}
\newtheorem{corollary}[theorem]{Corollary}

\theoremstyle{definition}

\newtheorem{assumption}[theorem]{Assumption}

\theoremstyle{remark}

\usepackage{mdframed}

\mdfdefinestyle{thmbox}{
  backgroundcolor  = gray!9,
  linecolor        = gray!45,
  linewidth        = 0.6pt,
  roundcorner      = 3pt,
  innertopmargin   = 7pt,
  innerbottommargin= 7pt,
  innerleftmargin  = 9pt,
  innerrightmargin = 9pt,
  skipabove        = \topsep,
  skipbelow        = \topsep,
}

\surroundwithmdframed[style=thmbox]{theorem}
\surroundwithmdframed[style=thmbox]{lemma}
\surroundwithmdframed[style=thmbox]{proposition}
\surroundwithmdframed[style=thmbox]{corollary}
\surroundwithmdframed[style=thmbox]{definition}

\mdfdefinestyle{remarkbox}{
  topline           = false,
  bottomline        = false,
  rightline         = false,
  leftline          = true,
  linecolor         = gray!60,
  linewidth         = 1.5pt,
  backgroundcolor   = white,
  innertopmargin    = 3pt,
  innerbottommargin = 3pt,
  innerleftmargin   = 9pt,
  innerrightmargin  = 0pt,
  skipabove         = \topsep,
  skipbelow         = \topsep,
  nobreak          = true, 
}

\surroundwithmdframed[style=remarkbox]{remark}

\title{Correcting Source Mismatch in Flow Matching with Radial-Angular Transport}

\author{%
  Fouad Oubari \\
  Université Paris-Saclay, CNRS, ENS Paris-Saclay, Centre Borelli \\
  \texttt{oubarifouad@gmail.com}
  \And
  Mathilde Mougeot \\
  Université Paris-Saclay, CNRS, ENS Paris-Saclay, Centre Borelli \\
  ENSIIE \\
  \texttt{mathilde.mougeot@ens-paris-saclay.fr}
}

\begin{document}

\maketitle

\begin{abstract}
Flow Matching is typically built from Gaussian sources and Euclidean probability paths. For heavy-tailed or anisotropic data, however, a Gaussian source induces a structural mismatch already at the level of the radial distribution. We introduce \textit{Radial--Angular Flow Matching (RAFM)}, a framework that explicitly corrects this source mismatch within the standard simulation-free Flow Matching template. RAFM uses a source whose radial law matches that of the data and whose conditional angular distribution is uniform on the sphere, thereby removing the Gaussian radial mismatch by construction. This reduces the remaining transport problem to angular alignment, which leads naturally to conditional paths on scaled spheres defined by spherical geodesic interpolation. The resulting framework yields explicit Flow Matching targets tailored to radial--angular transport without modifying the underlying deterministic training pipeline. 

We establish the exact density of the matched-radial source, prove a radial--angular KL decomposition that isolates the Gaussian radial penalty, characterize the induced target vector field, and derive a stability result linking Flow Matching error to generation error. We further analyze empirical estimation of the radial law, for which Wasserstein and CDF metrics provide natural guarantees. Empirically, RAFM substantially improves over standard Gaussian Flow Matching and remains competitive with recent non-Gaussian alternatives while preserving a lightweight deterministic training procedure. Overall, RAFM provides a principled source-and-path design for Flow Matching on heavy-tailed and extreme-event data.
\end{abstract}
\section{Introduction}
\label{sec:introduction}

An isotropic Gaussian source distribution is a default design choice in many modern generative models. It underlies diffusion models~\citep{sohl2015deep,ho2020ddpm,song2021sde}, flow-based generative models~\citep{rezende2015variational,dinh2016density,chen2018neural}, and more recently Flow Matching (FM)~\citep{lipman2023flowmatching}. This choice is attractive for analytical and computational reasons, but it can impose a structural bias when the target distribution exhibits non-Gaussian radial behavior.

Such a mismatch is not merely cosmetic. In many heavy-tailed and anisotropic settings, the distribution of norms departs substantially from that of a standard Gaussian~\citep{cont2001empirical,papalexiou2013extreme}. In high dimensions, norm statistics strongly shape the geometry of probability mass. In particular, for an isotropic Gaussian in ambient dimension $d$, most of the mass concentrates in a thin annulus at radius of order $\sqrt{d}$~\citep{vershynin2020high}. Consequently, when the source imposes Gaussian norm statistics while the target does not, the learned transport must first correct an artificial radial discrepancy before modeling the structure that actually characterizes the data. This burdens the transport with an avoidable radial correction and leads to a geometry that is less well aligned with the target distribution.

This issue is especially relevant in Flow Matching~\citep{lipman2023flowmatching}, where the generative design is specified directly through a source distribution and conditional probability paths, and the associated vector field is learned by regression. In this setting, source mismatch is not merely inherited from a forward noising process: it enters directly through the pair consisting of the source and the conditional transport geometry. This makes FM a particularly natural framework in which to study whether part of the modeling burden can be removed already at initialization, before learning the remaining transport. Recent work on multiplicative diffusion models~\citep{gruhlke2025multiplicative} similarly questions the suitability of Gaussian latent structure for heavy-tailed or anisotropic data. Our perspective is complementary. Rather than modifying the stochastic noising dynamics, we ask whether the same issue can be addressed directly within the standard simulation-free Flow Matching template through a coupled design of the source and the conditional path.

In this work, we propose \textit{Radial--Angular Flow Matching (RAFM)}, a structured Flow Matching framework that separates radial and angular roles in the transport. RAFM first matches the data radial law at the source, thereby removing the radial part of the Gaussian source mismatch by construction. Once radii are matched, the residual transport problem is primarily angular, which leads naturally to conditional paths on scaled spheres based on spherical geodesic interpolation. This yields explicit Flow Matching targets for radius-preserving conditional transport while preserving the standard FM training pipeline. Spherical geometry thus enters only through the matched-radius conditional path, while the overall generative problem remains posed in the ambient Euclidean space.

Our contributions are fourfold. First, we formalize \emph{Gaussian radial mismatch} in Flow Matching and show, via a radial--angular KL decomposition, that matching the source norm law removes the radial penalty induced by a standard Gaussian source. Second, we derive the corresponding matched-radius conditional transport and obtain explicit Flow Matching targets based on spherical geodesic interpolation. Third, we analyze the resulting dynamics, including norm preservation under tangential flows and a stability result linking Flow Matching error to generation error. Fourth, we study empirical estimation of the radial law, for which Wasserstein and CDF metrics provide natural guarantees, and evaluate the resulting framework on synthetic heavy-tailed and real structured datasets. Empirically, RAFM substantially improves over standard Gaussian FM and remains competitive with recent non-Gaussian alternatives while preserving the lightweight simulation-free FM template.

Overall, our work suggests that in Flow Matching, the source distribution should not be treated as a neutral implementation detail. For non-Gaussian targets, source design and conditional transport geometry should be chosen jointly, as part of the geometry of the generative problem itself.

\begin{figure}[t]
    \centering
    \begin{minipage}[c]{0.65\textwidth}
        \centering
        \includegraphics[width=\textwidth]{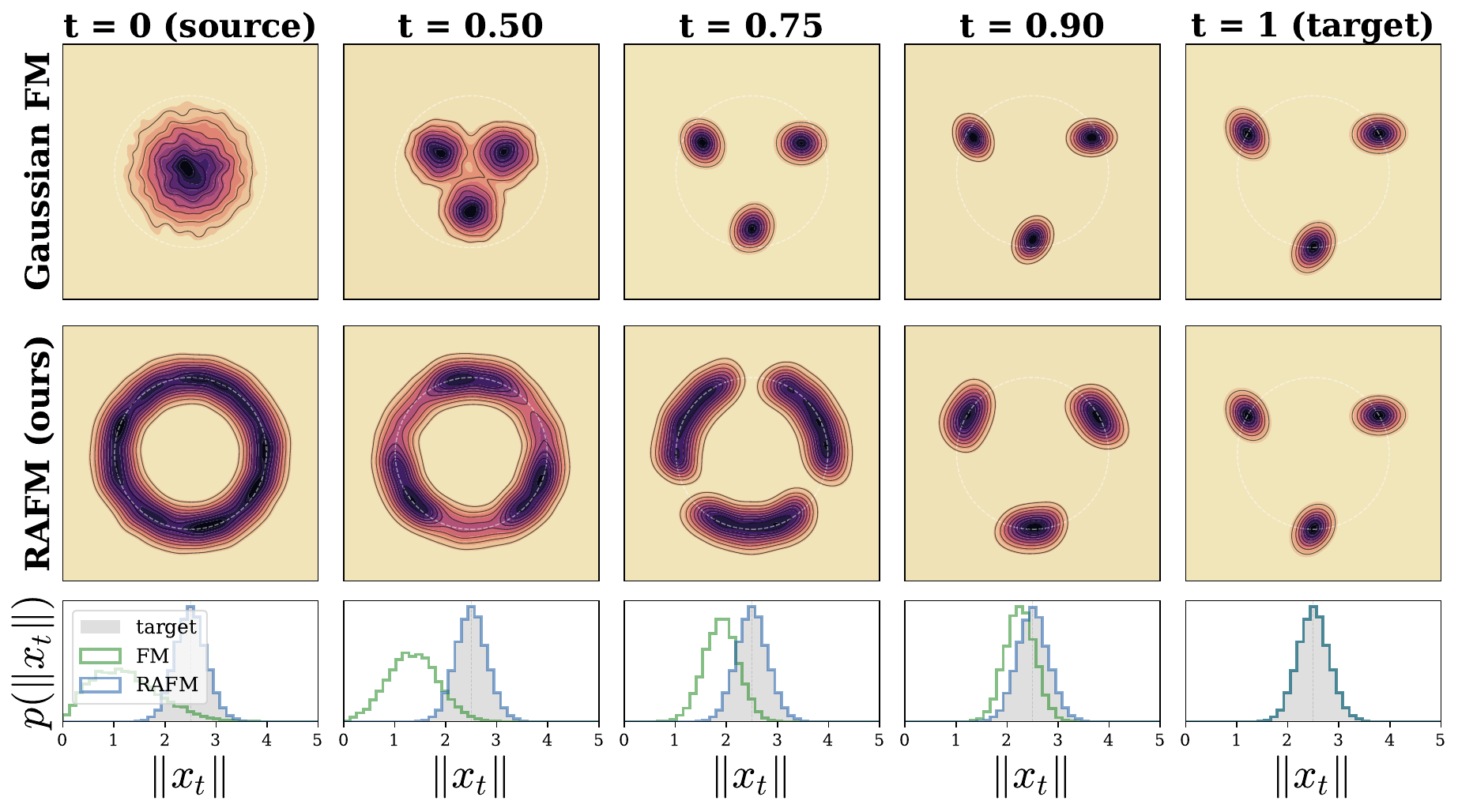}
    \end{minipage}%
    \hfill
    \begin{minipage}[c]{0.35\textwidth}
        \centering
        \includegraphics[width=\textwidth]{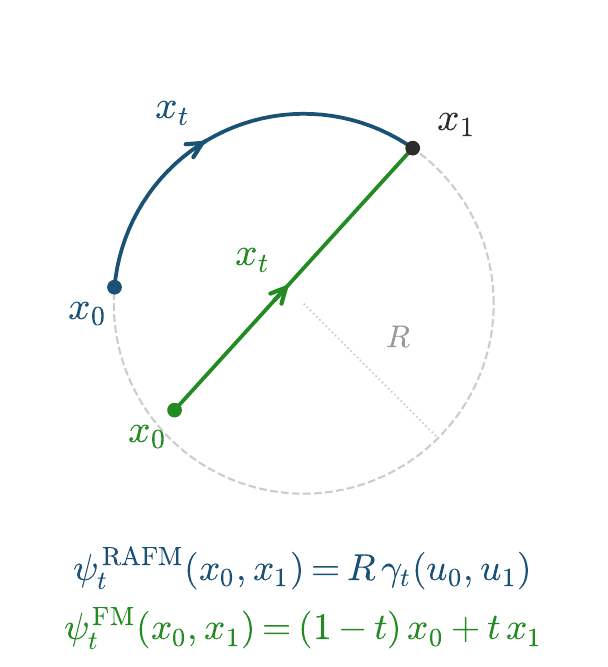}
    \end{minipage}
    \caption{Conceptual comparison between standard Gaussian Flow Matching and Radial--Angular Flow Matching (RAFM). 
    Left: intermediate marginals and radial diagnostics on the 2D toy example. Gaussian FM progressively corrects both radius and angle, whereas RAFM preserves the radial law and mainly reorganises mass angularly. 
    Right: schematic illustration of the conditional interpolation geometries used in our comparison. RAFM follows a matched-radius geodesic path on the scaled sphere, whereas the Gaussian FM baseline uses a linear Euclidean interpolation.}
    \label{fig:rafm_overview}
\end{figure}
\section{Related Work}

The works most relevant to RAFM fall into four connected directions: Flow Matching and conditional path design, manifold-aware transport, adaptation of the source distribution, and non-Gaussian diffusion dynamics. RAFM is related to each of these lines, but differs in its central objective: it addresses a source-mismatch mechanism specific to Flow Matching, and derives from it a coupled design of the source distribution and conditional transport geometry within the standard simulation-free Conditional Flow Matching template.

Flow Matching \citep{lipman2023flowmatching} introduced a simulation-free framework for training continuous normalizing flows by regressing vector fields associated with prescribed conditional probability paths. Subsequent extensions, including Rectified Flow \citep{liu2023rectifiedflow}, Simulation-Free Schr\"odinger Bridges via Score and Flow Matching \citep{tong2023simulation}, Functional Flow Matching \citep{kerrigan2023functional}, and Optimal Flow Matching \citep{kornilov2024optimal}, further showed that the choice of probability path can strongly affect learnability and sampling efficiency. RAFM builds on this path-design perspective, but focuses on a different question: when the source itself is mismatched, part of the transport burden is artificial. Our contribution is therefore not to introduce a non-Euclidean path in isolation, but to show that correcting the radial source mismatch in FM induces a matched-radius conditional transport problem whose natural geometry is angular and radius-preserving.

A second related line studies generative modeling under non-Euclidean geometry. Normalizing Flows on Tori and Spheres \citep{rezende2020normalizing}, Moser Flow \citep{rozen2021moser}, Matching Normalizing Flows and Probability Paths on Manifolds \citep{ben2022matching}, Riemannian Score-Based Generative Modelling \citep{de2022riemannian}, Flow Matching on general geometries \citep{chen2023flow}, and Metric Flow Matching \citep{kapusniak2024metric} show that non-Euclidean interpolants and manifold-aware constructions can lead to more meaningful transport than standard Euclidean paths. RAFM is related to this principle through its use of spherical geometry, but differs in scope and motivation. RAFM does not assume that the data are supported on a fixed manifold, nor does it endow the ambient data space with a global non-Euclidean geometry. Instead, the ambient distribution remains fully Euclidean, and spherical geometry appears only conditionally, after radius matching, as the transport geometry induced by the residual angular problem.

Another nearby literature modifies the source or base distribution rather than only the transport map. In normalizing flows, the classical formulation starts from a simple tractable base and learns an expressive transformation \citep{rezende2015variational}, but several works have shown that the base distribution can itself be a source of mismatch. Tails of Lipschitz Triangular Flows \citep{jaini2020tails} analyzed how tail behavior constrains what common flow architectures can represent from a given source. Resampling Base Distributions of Normalizing Flows \citep{stimper2022resampling} addressed support and topology mismatch through learned rejection-sampling bases, while Marginal Tail-Adaptive Normalizing Flows \citep{laszkiewicz2022marginal} and Flexible Tails for Normalizing Flows \citep{hickling2024flexible} proposed mechanisms to better capture heavy-tailed behavior. RAFM is closely related in spirit to this line of work, but in a more structured way: rather than enriching the source generically, it isolates the radial component of the mismatch, corrects it explicitly, and leaves the remaining discrepancy to angular transport. Our radial--angular KL decomposition further makes this reduction explicit.

Recent work has also revisited the stochastic dynamics used in diffusion models. Standard additive-Gaussian formulations such as DDPM \citep{ho2020ddpm} and the score-SDE framework \citep{song2021sde} rely on Gaussian priors and Gaussian forward noising, while related approaches such as Diffusion Schr\"odinger Bridges \citep{de2021diffusion} and later design-space analyses \citep{karras2022elucidating} explore alternative stochastic transport mechanisms and samplers. More recently, Heavy-Tailed Diffusion Models \citep{pandey2024heavy} and Multiplicative Diffusion Models \citep{gruhlke2025multiplicative} directly question the suitability of Gaussian latent structure for heavy-tailed data. The latter is the closest conceptual comparator to RAFM. Multiplicative diffusion addresses radial mismatch by modifying the forward stochastic process and learning the resulting score field, whereas RAFM addresses the same broad issue directly at the level of Flow Matching design: it keeps the standard simulation-free CFM training template and incorporates non-Gaussian structure through a coupled choice of source, coupling, and conditional path. In this sense, RAFM provides a deterministic route to modeling non-Gaussian radial structure without introducing a new stochastic noising mechanism.
\section{Method}
\label{sec:method}

RAFM specializes Conditional Flow Matching (CFM) to exploit radial--angular structure in the data. Standard Gaussian CFM combines two tasks in a single transport: it must first correct the radial mismatch induced by the Gaussian source and then model the directional structure of the target distribution. For data with non-Gaussian norm statistics, such as heavy-tailed or anisotropic distributions, this coupling can introduce an avoidable radial correction into the learned transport.

Our key idea is to separate these two roles. As illustrated in Figure~\ref{fig:rafm_overview}, RAFM first matches the data radial law at the source and then transports mass only along scaled spheres. Concretely, for a target sample $x_1$, RAFM draws
\[
x_0=\|x_1\|u_0,
\qquad
u_0\sim \mathrm{Unif}(S^{d-1}),
\]
and connects $x_0$ to $x_1$ through a spherical geodesic at fixed radius:
\[
\psi_t(x_0,x_1)=\|x_1\|\,\gamma_t(u_0,u_1),
\qquad
u_1=\frac{x_1}{\|x_1\|}.
\]
Training still uses the standard CFM regression objective, but with a source and path adapted to this radial--angular factorization. At sampling time, radii are initialized from the empirical radial law estimated on the training set, while directions are sampled uniformly.

\subsection{Conditional Flow Matching background}
\label{subsec:cfm}

We briefly recall the CFM template on which RAFM is built \cite{lipman2023flowmatching}. Let $q_0$ be a source distribution on $\mathbb{R}^d$, let $p_{\mathrm{data}}$ denote the data distribution, and let $\eta$ be a coupling of $q_0$ and $p_{\mathrm{data}}$. Given a differentiable interpolation map
\[
\psi_t:\mathbb{R}^d\times\mathbb{R}^d \to \mathbb{R}^d,
\qquad t\in[0,1],
\]
such that $\psi_0(x_0,x_1)=x_0$ and $\psi_1(x_0,x_1)=x_1$, we define
\[
(X_0,X_1)\sim \eta,
\qquad
X_t:=\psi_t(X_0,X_1).
\]
The induced marginal path $p_t=\mathrm{Law}(X_t)$ connects $q_0$ to $p_{\mathrm{data}}$.

A time-dependent vector field $v:[0,1]\times\mathbb{R}^d\to\mathbb{R}^d$ generates the flow ODE
\[
\dot Y_t = v(t,Y_t),
\qquad
Y_0\sim q_0.
\]
We distinguish this ODE state $Y_t$ from the conditional interpolation variable
$X_t=\psi_t(X_0,X_1)$. If a vector field $v$ generates the marginal path
$p_t=\mathrm{Law}(X_t)$, then the associated densities satisfy the continuity equation
\[
\partial_t p_t(x) + \nabla\!\cdot\!\bigl(p_t(x)\,v(t,x)\bigr)=0.
\]

In CFM, the target vector field associated with $\psi_t$ is
\[
v^\star(t,x)
=
\mathbb{E}\!\left[
\partial_t \psi_t(X_0,X_1)
\,\middle|\,
X_t=x
\right],
\]
and a neural vector field $v_\theta$ is trained by regressing the analytic path derivative:
\[
\mathcal{L}_{\mathrm{CFM}}(\theta)
=
\mathbb{E}_{t\sim \mathrm{Unif}[0,1]}
\mathbb{E}_{(X_0,X_1)\sim \eta}
\left[
\left\|
v_\theta(t,X_t)-\partial_t \psi_t(X_0,X_1)
\right\|_2^2
\right],
\qquad
X_t=\psi_t(X_0,X_1).
\]

Within this framework, the main design choice is the pair $(q_0,\psi_t)$. RAFM changes exactly these two objects: it matches the data radial law at the source and chooses a conditional path that preserves radius throughout the transport.

\subsection{Matching the radial law at the source}
\label{subsec:radial_source}

The first question is whether the Gaussian source creates a meaningful mismatch in the first place. When the target radial law differs from the Gaussian one, standard CFM must spend part of its transport budget correcting the norm distribution before it can model the structure that actually distinguishes the data. RAFM removes this mismatch directly at initialization.

Let
\[
S^{d-1}:=\{u\in\mathbb{R}^d:\|u\|=1\}
\]
denote the unit sphere, and let $|S^{d-1}|$ denote its surface measure. For a sample $X\sim p_{\mathrm{data}}$, assume for simplicity that $p_{\mathrm{data}}(\{0\})=0$, and write
\[
R:=\|X\|\in \mathbb{R}_+,
\qquad
U:=\frac{X}{\|X\|}\in S^{d-1}.
\]
We denote by $p_R$ the density of the radial variable $R$, and by $p_{U\mid R}(\cdot\mid r)$ the conditional angular distribution at radius $r$.

RAFM uses a source that preserves the data norm law while removing directional structure:
\[
X_0 := R\,U_0,
\qquad
R\sim p_R,
\qquad
U_0\sim \mathrm{Unif}(S^{d-1}),
\]
where $R$ and $U_0$ are independent. We denote the law of $X_0$ by $q_{\mathrm{rad}}$. By the polar change-of-variables formula,
\[
q_{\mathrm{rad}}(x)
=
\frac{p_R(\|x\|)}{|S^{d-1}|\,\|x\|^{d-1}},
\qquad x\neq 0,
\]
and, by construction,
\[
\|X_0\|\overset{d}{=}\|X\|.
\]
Hence the source preserves exactly how much mass lies at each distance from the origin, while redistributing that mass uniformly over the corresponding sphere.

This construction is closely related in spirit to the non-Gaussian latent structure induced by multiplicative diffusion \cite{gruhlke2025multiplicative}, but here it enters directly through the source choice within CFM rather than through a forward stochastic process.

To quantify the benefit of this source correction, we compare the KL divergence from $p_{\mathrm{data}}$ to $q_{\mathrm{rad}}$ and to a standard Gaussian source. Let $\phi_d$ denote the standard Gaussian density on $\mathbb{R}^d$, and let $p_{\chi_d}$ denote the density of $\|Z\|$ for $Z\sim\mathcal N(0,I_d)$. Since both $q_{\mathrm{rad}}$ and $\phi_d$ are conditionally uniform in direction at fixed radius, the difference between them is purely radial.

\begin{theorem}[Radial KL decomposition]
\label{thm:radial_kl}
Assume that the relevant conditional densities exist and that the divergences below are finite. Then
\[
\mathrm{KL}(p_{\mathrm{data}}\|q_{\mathrm{rad}})
=
\mathbb{E}_{r\sim p_R}
\Big[
\mathrm{KL}\bigl(p_{U\mid R}(\cdot\mid r)\,\|\,\mathrm{Unif}(S^{d-1})\bigr)
\Big],
\]
whereas
\[
\mathrm{KL}(p_{\mathrm{data}}\|\phi_d)
=
\mathrm{KL}(p_R\|p_{\chi_d})
+
\mathbb{E}_{r\sim p_R}
\Big[
\mathrm{KL}\bigl(p_{U\mid R}(\cdot\mid r)\,\|\,\mathrm{Unif}(S^{d-1})\bigr)
\Big].
\]
Consequently,
\[
\mathrm{KL}(p_{\mathrm{data}}\|q_{\mathrm{rad}})
\le
\mathrm{KL}(p_{\mathrm{data}}\|\phi_d),
\]
with equality if and only if $p_R=p_{\chi_d}$ almost everywhere.
\end{theorem}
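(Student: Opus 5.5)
The plan is to pass to polar coordinates and apply the chain rule for KL divergence to the map $x\mapsto(\|x\|,x/\|x\|)$. This map is a bijection from $\mathbb{R}^d\setminus\{0\}$ onto $(0,\infty)\times S^{d-1}$, and $\{0\}$ is null for $p_{\mathrm{data}}$. Since KL divergence is invariant under measurable bijections applied to both arguments, it suffices to compute the divergences between the pushforward laws of $(R,U)$. Under $p_{\mathrm{data}}$, the joint law of $(R,U)$ has density $p_R(r)\,p_{U\mid R}(u\mid r)$ with respect to $dr\otimes\sigma(du)$, where $\sigma$ is the normalized surface measure. Any density on $\mathbb{R}^d$ of the form $g(\|x\|)/(|S^{d-1}|\,\|x\|^{d-1})$ pushes forward to $g(r)\cdot 1$ in the same coordinates. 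This is exactly the polar formula already used for $q_{\mathrm{rad}}$.

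First I would show that both reference measures factor as a product of a radial law and the uniform angular law. For $q_{\mathrm{rad}}$ this holds by construction, giving $p_R\otimes\mathrm{Unif}(S^{d-1})$. For $\phi_d$, rotation invariance shows that $Z/\|Z\|$ is uniform and independent of $\|Z\|$. Hence $\phi_d$ pushes forward to $p_{\chi_d}\otimes\mathrm{Unif}(S^{d-1})$, and explicitly $\phi_d(x)=p_{\chi_d}(\|x\|)/(|S^{d-1}|\,\|x\|^{d-1})$.

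Next I would apply the chain rule: for a product reference $\rho\otimes\mathrm{Unif}$,
\[
\mathrm{KL}(p_{\mathrm{data}}\|\rho\otimes\mathrm{Unif})=\mathrm{KL}(p_R\|\rho)+\mathbb{E}_{r\sim p_R}\bigl[\mathrm{KL}(p_{U\mid R}(\cdot\mid r)\|\mathrm{Unif}(S^{d-1}))\bigr].
\]
To prove it, write $\log\frac{p_R(r)p_{U\mid R}(u\mid r)}{\rho(r)\cdot 1}=\log\frac{p_R(r)}{\rho(r)}+\log p_{U\mid R}(u\mid r)$, integrate against the joint law, and use Fubini/Tonelli to split the integral. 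Taking $\rho=p_R$ makes the radial term vanish and gives the first identity. Taking $\rho=p_{\chi_d}$ gives the second.

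The inequality then follows because $\mathrm{KL}(p_R\|p_{\chi_d})\ge 0$, with equality iff $p_R=p_{\chi_d}$ a.e. (Gibbs' inequality). Here the finiteness assumption is essential, since it lets us subtract the common angular term.

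The main obstacle is justifying the split of the integral when the individual terms may have indefinite sign. The pointwise log-ratio is not integrable in general, but the conditional KL is nonnegative. Under the stated finiteness of the total divergences, I would handle this by splitting into positive and negative parts. Alternatively, one can invoke the standard disintegration form of the KL chain rule, valid whenever the joint KL is finite, which guarantees that both summands are finite and nonnegative. A secondary point is the measure-zero issue at the origin and the Jacobian $r^{d-1}$. The Jacobian cancels identically in both density ratios, which is precisely why the decomposition takes such a clean form.
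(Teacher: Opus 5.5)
Your proposal is correct and follows essentially the same route as the paper. The paper also writes $p_{\mathrm{data}}$, $q_{\mathrm{rad}}$ and $\phi_d$ in polar coordinates, with both references of the form $g(\|x\|)/(|S^{d-1}|\,\|x\|^{d-1})$ so that the Jacobian cancels in the log-ratio. It then integrates the split log-ratio against $p_R(r)\,p_{U\mid R}(u\mid r)$ and concludes from the nonnegativity of $\mathrm{KL}(p_R\|p_{\chi_d})$. Your framing via the KL chain rule, and your care about splitting integrals whose terms may have indefinite sign, are somewhat more rigorous than the paper, which performs that split directly under the finiteness assumption.
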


Theorem~\ref{thm:radial_kl} shows that a Gaussian source pays an additional radial penalty $\mathrm{KL}(p_R\|p_{\chi_d})$, whereas the radial source removes it by construction. In other words, once the source is matched in radius, the remaining mismatch is purely angular. Proofs and extensions are deferred to Appendix~\ref{app:radial_source}.

In practice, the radial law is unknown and must be estimated from training data. Given samples
$\{x^{(i)}\}_{i=1}^n$, we form the radii
\[
r_i := \|x^{(i)}\|,
\qquad i=1,\dots,n,
\]
and define the empirical radial measure
\[
\widehat\mu_R := \frac{1}{n}\sum_{i=1}^n \delta_{r_i},
\]
with empirical CDF $\widehat F_R$. Unconditional samples are then initialized from
\[
\widehat X_0 = \widehat R\,U_0,
\qquad
\widehat R \sim \widehat\mu_R,
\qquad
U_0\sim \mathrm{Unif}(S^{d-1}),
\]
with $\widehat R$ and $U_0$ independent. In practice, $\widehat R$ may be sampled by inversion or
resampling from the empirical radial law. This is the main practical payoff of the decomposition:
source adaptation reduces to estimating a one-dimensional radial distribution. Appendix~\ref{app:empirical_radial_source}
provides uniform CDF convergence and Wasserstein transfer guarantees for this empirical source.

\subsection{Spherical transport for angular alignment}
\label{subsec:spherical_paths}

Once the radial mismatch is removed at the source, the remaining transport problem is angular. The conditional path should therefore preserve the matched radius rather than spending transport effort changing it again. RAFM achieves this by transporting samples along spherical geodesics on the scaled sphere determined by the target radius.

Given a target sample $x_1\neq 0$, let
\[
R:=\|x_1\|,
\qquad
u_1:=\frac{x_1}{\|x_1\|}\in S^{d-1}.
\]
We sample
\[
u_0\sim \mathrm{Unif}(S^{d-1}),
\qquad
x_0:=R\,u_0.
\]
By construction, $x_0$ has the same radius as $x_1$, and marginally $x_0\sim q_{\mathrm{rad}}$.

For non-antipodal directions $u_0\neq -u_1$, define
\[
\theta:=\arccos(\langle u_0,u_1\rangle)\in[0,\pi).
\]
The spherical geodesic interpolation is
\[
\gamma_t(u_0,u_1)
=
\frac{\sin((1-t)\theta)}{\sin\theta}\,u_0
+
\frac{\sin(t\theta)}{\sin\theta}\,u_1,
\qquad t\in[0,1],
\]
and the corresponding interpolation in $\mathbb{R}^d$ is
\[
\psi_t(x_0,x_1)
=
R\,\gamma_t(u_0,u_1).
\]
Degenerate antipodal and near-origin cases are measure-zero or numerically delicate edge cases; we specify deterministic completions and a dedicated failure-mode analysis in Appendix~\ref{app:spherical_path} and Appendix~\ref{app:toy2d}.

The key geometric property is that this path never changes the radius.

\begin{proposition}[Radius preservation and tangency of the spherical path]
\label{prop:spherical_path}
For any non-antipodal pair $(x_0,x_1)$ with $\|x_0\|=\|x_1\|=R$, the path
\[
X_t:=\psi_t(x_0,x_1)
\]
satisfies
\[
X_0=x_0,
\qquad
X_1=x_1,
\qquad
\|X_t\|=R
\quad \forall t\in[0,1].
\]
Moreover, its velocity is tangent to the scaled sphere $R\,S^{d-1}$:
\[
X_t^\top \dot X_t = 0
\qquad \forall t\in[0,1].
\]
\end{proposition}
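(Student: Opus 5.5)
The plan is to reduce everything to the unit sphere. Write $u_0=x_0/R$, $u_1=x_1/R$, so $X_t=R\,\gamma_t(u_0,u_1)$. Both claims are then homogeneous in $R$: $\|X_t\|=R\|\gamma_t\|$ and $X_t^\top\dot X_t=R^2\,\gamma_t^\top\dot\gamma_t$. It therefore suffices to show $\|\gamma_t\|=1$ for all $t$ and the endpoint identities. Tangency then follows by differentiating $\|\gamma_t\|^2\equiv1$, which gives $2\gamma_t^\top\dot\gamma_t=0$. This holds because $\gamma_t$ is smooth in $t$ for fixed $\theta$.

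I will first treat the generic case $\theta\in(0,\pi)$, where $\sin\theta>0$ and the formula is well defined. The endpoints are immediate. At $t=0$ the coefficients are $(1,0)$, and at $t=1$ they are $(0,1)$, giving $X_0=Ru_0=x_0$ and $X_1=Ru_1=x_1$.

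For the norm, set $a=\sin((1-t)\theta)$ and $b=\sin(t\theta)$, and use $\langle u_0,u_1\rangle=\cos\theta$. Then
\[
\sin^2\theta\,\|\gamma_t\|^2=a^2+b^2+2ab\cos\theta .
\]
I will show this equals $\sin^2\theta$ with the product-to-sum formulas, writing $\alpha=(1-t)\theta$ and $\beta=t\theta$, so that $\alpha+\beta=\theta$. The identity
\[
\sin^2\alpha+\sin^2\beta+2\sin\alpha\sin\beta\cos(\alpha+\beta)=\sin^2(\alpha+\beta)
\]
follows by expanding $\sin^2(\alpha+\beta)=(\sin\alpha\cos\beta+\cos\alpha\sin\beta)^2$ and substituting $\cos(\alpha+\beta)=\cos\alpha\cos\beta-\sin\alpha\sin\beta$. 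Both sides then reduce to the same polynomial in $\sin$ and $\cos$, using $\cos^2=1-\sin^2$. This trigonometric identity is the only real computation, and I expect it to be the main, though modest, obstacle. An alternative check: $\gamma_t=\cos(t\theta)u_0+\sin(t\theta)w$, with $w$ the unit vector orthogonal to $u_0$ in $\mathrm{span}(u_0,u_1)$, which makes the unit norm obvious.

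Finally, I will handle the degenerate case $\theta=0$, where $u_0=u_1$ and the formula is $0/0$. Here I interpret $\gamma_t$ by its continuous limit, $\sin(c\theta)/\sin\theta\to c$, which gives $\gamma_t=(1-t)u_0+tu_1=u_0$. This path is constant, so the norm equals $R$ and $\dot X_t=0$, and the tangency identity holds trivially. This is consistent with the deterministic completions referenced in Appendix~\ref{app:spherical_path}. Antipodal pairs are excluded by hypothesis.
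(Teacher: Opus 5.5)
Your proof is correct and follows the same route as the paper. You read off the endpoints from the coefficients, use $\|\gamma_t\|=1$ to get $\|X_t\|=R$, and differentiate $\|X_t\|^2=R^2$ to obtain tangency. The paper simply asserts $\gamma_t(u_0,u_1)\in S^{d-1}$ without computation, so your trigonometric verification (or the rewriting $\gamma_t=\cos(t\theta)u_0+\sin(t\theta)w$) and your limiting treatment of $\theta=0$ fill in details the paper leaves implicit.
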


Thus, once a training pair is radius-matched, the ideal transport does not need to correct the norm at all. The proof is given in Appendix~\ref{app:spherical_path}.

Differentiating the interpolation yields the analytic target velocity
\[
\partial_t \psi_t(x_0,x_1)
=
R\,\frac{\theta}{\sin\theta}
\left[
-\cos((1-t)\theta)\,u_0
+
\cos(t\theta)\,u_1
\right].
\]
This velocity is tangent by construction and corresponds to constant-speed geodesic motion on the scaled sphere. Appendix~\ref{app:spherical_path} further shows that it can be written through the Riemannian logarithm map on $R\,S^{d-1}$.

\subsection{Training and sampling with tangential constraints}
\label{subsec:practical_rafm}

Specializing CFM to the matched-radius coupling above yields the RAFM objective

\[
\begin{split}
\mathcal{L}_{\mathrm{RAFM}}(\theta)
&=
\mathbb{E}_{t\sim \mathrm{Unif}[0,1]}
\mathbb{E}_{\substack{X_1\sim p_{\mathrm{data}}\\ U_0\sim \mathrm{Unif}(S^{d-1})}}
\left[
\left\|
v_\theta(t,X_t)-\partial_t\psi_t(X_0,X_1)
\right\|_2^2
\right],\\
&\qquad
X_0=\|X_1\|U_0,
\qquad
X_t=\psi_t(X_0,X_1).
\end{split}
\]
where $X_1\sim p_{\mathrm{data}}$ and $X_0=\|X_1\|U_0$ with $U_0\sim \mathrm{Unif}(S^{d-1})$.

This formulation has two important practical consequences. First, during training, the radius is copied directly from the target sample through the matched-radius coupling, so the empirical radial law is not needed inside the loss. Second, at unconditional sampling time, the empirical radial law is used only to initialize the source radius, after which the learned dynamics transport directions on the corresponding sphere.

The target field is tangent by construction, but the learned network need not be exactly tangent. In practice, even small radial components can accumulate during numerical integration. We therefore project the predicted velocity onto the tangent space of the current sphere:
\[
\Pi_{T_x}(v)
=
v-\frac{\langle x,v\rangle}{\|x\|^2}x.
\]
This projection is not a cosmetic implementation detail. It is the practical bridge between the ideal spherical geometry of the target field and the approximate vector field learned by the network. Table~\ref{tab:full_projection_ablation} shows that it becomes increasingly important on the more challenging regimes, while Appendix~\ref{app:toy2d} isolates a distinct near-origin failure mode in very low dimension.

The resulting training and sampling procedures are summarized in Algorithms~\ref{alg:rafm_train} and~\ref{alg:rafm_sample}.

\begin{algorithm}[t]
\caption{RAFM training}
\label{alg:rafm_train}
\small
\begin{algorithmic}[1]
\REQUIRE Training set $\{x^{(i)}\}_{i=1}^n$, vector field $v_\theta$
\REPEAT
    \STATE Sample a mini-batch $\{x_1^{(b)}\}_{b=1}^B$ from the training set and times $\{t_b\}_{b=1}^B$ with $t_b\sim\mathrm{Unif}[0,1]$
    \STATE Sample $\{u_0^{(b)}\}_{b=1}^B\sim \mathrm{Unif}(S^{d-1})$ and set $x_0^{(b)}\gets \|x_1^{(b)}\|\,u_0^{(b)}$
    \STATE Compute $x_{t_b}^{(b)}\gets\psi_{t_b}(x_0^{(b)},x_1^{(b)})$ and $\dot x_{t_b}^{(b)}\gets\partial_t\psi_{t_b}(x_0^{(b)},x_1^{(b)})$
    \STATE Update $\theta$ on
    \[
    \frac{1}{B}\sum_{b=1}^B
    \bigl\|
    v_\theta(t_b,x_{t_b}^{(b)})-\dot x_{t_b}^{(b)}
    \bigr\|_2^2
    \]
\UNTIL{convergence}
\end{algorithmic}
\end{algorithm}

\begin{algorithm}[t]
\caption{RAFM sampling}
\label{alg:rafm_sample}
\small
\begin{algorithmic}[1]
\REQUIRE Trained $v_\theta$, empirical radial law $\widehat\mu_R$ (with CDF $\widehat F_R$), ODE solver
\STATE Sample $R\sim\widehat\mu_R$ and $u_0\sim\mathrm{Unif}(S^{d-1})$; set $x_0\gets R\,u_0$
\STATE Initialize the solver state with $x_t \gets x_0$
\FOR{solver steps from $t=0$ to $t=1$}
    \STATE Evaluate $\hat v\gets v_\theta(t,x_t)$
    \STATE Project $\hat v^\perp\gets \Pi_{T_{x_t}}(\hat v)$
    \STATE Advance the solver state $x_t$ using $\hat v^\perp$
\ENDFOR
\RETURN final state $x_{t=1}$
\end{algorithmic}
\end{algorithm}

\subsection{Guarantees for radial preservation and generation stability}
\label{subsec:theory_summary}

The design above raises two natural questions. First, if the learned dynamics are tangent, do they preserve the radial law fixed by the source? Second, if the learned field approximates the RAFM target well, does this translate into accurate generation? The next two results answer these questions.

\begin{proposition}[Tangential flows preserve the radial law]
\label{prop:tangential_preserves_norm_main}
Assume that
\[
x^\top v_\theta(t,x)=0
\qquad
\text{for all } (t,x)\in [0,1]\times(\mathbb{R}^d\setminus\{0\}).
\]
Let $Y_t$ solve
\[
\dot Y_t=v_\theta(t,Y_t)
\]
with $Y_0\sim q_{\mathrm{rad}}$. Then
\[
\|Y_t\|\overset{d}{=}\|Y_0\|
\overset{d}{=}\|X_{\mathrm{data}}\|,
\qquad X_{\mathrm{data}}\sim p_{\mathrm{data}},
\]
for every $t\in[0,1]$.
\end{proposition}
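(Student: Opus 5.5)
The argument is pathwise: a tangential field preserves the norm along every individual trajectory, and the distributional statement then follows by taking laws. Fix an initial point $y_0\neq 0$ and let $Y_t$ be the solution of $\dot Y_t=v_\theta(t,Y_t)$ with $Y_0=y_0$. We assume enough regularity of $v_\theta$ (for instance, locally Lipschitz in $x$ on $\mathbb{R}^d\setminus\{0\}$, continuous in $t$) for the flow to be well defined. Set $g(t):=\|Y_t\|^2$. As long as $Y_t\neq 0$, $g$ is differentiable with
\[
g'(t)=2\,Y_t^\top \dot Y_t = 2\,Y_t^\top v_\theta(t,Y_t)=0
\]
by the tangency hypothesis. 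Hence $\|Y_t\|=\|y_0\|$ on any interval where the trajectory stays away from the origin.

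The main step, and the only real obstacle, is to rule out the trajectory reaching $0$, where the hypothesis gives no information. I would use a continuity argument. Let $\tau:=\sup\{s\in[0,1]: Y_t\neq 0 \text{ for all } t\in[0,s]\}$. On $[0,\tau)$ the norm is constant, equal to $\|y_0\|>0$. By continuity of $t\mapsto Y_t$, this gives $\|Y_\tau\|=\|y_0\|>0$. If $\tau<1$, continuity would then give a neighbourhood of $\tau$ on which $Y_t\neq 0$, contradicting maximality, so $\tau=1$. The same argument shows that the solution stays on the compact sphere $\|y_0\|S^{d-1}$, so it cannot blow up and exists on all of $[0,1]$.

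For the exceptional initial point $y_0=0$ there is nothing to prove. It has $q_{\mathrm{rad}}$-probability zero because $p_{\mathrm{data}}(\{0\})=0$, and indeed $q_{\mathrm{rad}}$ has a density by the polar formula.

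Finally, the pathwise identity $\|Y_t\|=\|Y_0\|$ holds almost surely for every $t$, and almost-sure equality implies equality in law, so $\|Y_t\|\overset{d}{=}\|Y_0\|$. By construction of the source, $Y_0=RU_0$ with $R\sim p_R$ and $\|U_0\|=1$, so $\|Y_0\|=R\overset{d}{=}\|X_{\mathrm{data}}\|$. This completes the chain of equalities in distribution for every $t\in[0,1]$.
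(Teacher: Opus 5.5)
Your proof is correct and follows the paper's argument. You differentiate $\|Y_t\|^2$, use $Y_t^\top v_\theta(t,Y_t)=0$ to obtain pathwise norm conservation, and conclude in law from $\|Y_0\|=R$. The continuity argument at the origin is harmless but not really needed, since $Y_t^\top v_\theta(t,Y_t)=0$ holds trivially when $Y_t=0$; under the paper's Lipschitz assumption, tangency even forces $v_\theta(t,0)=0$, so the argument only matters in your weaker setting where $v_\theta$ may be undefined at the origin.
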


Proposition~\ref{prop:tangential_preserves_norm_main} formalizes the role of tangential projection: when the learned field is tangent, the norm is preserved exactly, so the radial law remains controlled entirely by the source. Proofs and stronger statements on norm evolution are given in Appendix~\ref{app:stability}.



To relate target approximation to generation quality, define the population RAFM regression error
\[
\mathcal{E}_{\mathrm{RAFM}}(\theta)
=
\mathbb{E}\int_0^1
\bigl\|
v_\theta(t,X_t)-\partial_t\psi_t(X_0,X_1)
\bigr\|_2^2\,dt,
\qquad
X_t=\psi_t(X_0,X_1).
\]

\begin{theorem}[Generation stability]
\label{thm:generation_stability_main}
Assume that $\mathbb E[\|X_1\|^2]<\infty$ and that $v_\theta$ is Lipschitz in space with constant $L_\theta$. Then
\[
W_2\!\left((\Phi_1^\theta)_\# q_{\mathrm{rad}},\,p_{\mathrm{data}}\right)
\le
e^{L_\theta}\,\mathcal{E}_{\mathrm{RAFM}}(\theta)^{1/2},
\]
where $\Phi_1^\theta$ is the flow map induced by $v_\theta$.
\end{theorem}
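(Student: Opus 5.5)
We couple the generated sample to the data sample pathwise, using the conditional interpolation itself. Draw $(X_0,X_1)$ from the RAFM coupling, so that $X_1\sim p_{\mathrm{data}}$ and $X_0=\|X_1\|U_0$. Then $X_0\sim q_{\mathrm{rad}}$, because $\|X_1\|\sim p_R$ is independent of $U_0$. Set $X_t=\psi_t(X_0,X_1)$ and let $Y_t=\Phi_t^\theta(X_0)$ be the learned flow started from the same point. The pair $(Y_1,X_1)$ is a coupling of $(\Phi_1^\theta)_\# q_{\mathrm{rad}}$ and $p_{\mathrm{data}}$, so
\[
W_2^2\le \mathbb E\|Y_1-X_1\|^2 .
\]
The assumption $\mathbb E\|X_1\|^2<\infty$ makes this expectation meaningful. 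Since $\|X_t\|=\|X_1\|$ by Proposition~\ref{prop:spherical_path}, it also makes every second moment in sight finite. Lipschitzness of $v_\theta$ gives a global flow with $\|Y_t\|$ growing at most exponentially, which settles integrability.

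Next we run a Grönwall argument on the error $e_t:=Y_t-X_t$, which satisfies $e_0=0$. Its derivative is
\[
\dot e_t=v_\theta(t,Y_t)-\partial_t\psi_t(X_0,X_1)
=\bigl[v_\theta(t,Y_t)-v_\theta(t,X_t)\bigr]+\delta_t,
\qquad
\delta_t:=v_\theta(t,X_t)-\partial_t\psi_t(X_0,X_1).
\]
The path $t\mapsto X_t$ is smooth for non-antipodal pairs, and antipodal pairs form a null set (Appendix~\ref{app:spherical_path}), so this identity holds almost surely. To avoid a factor of $2$ in the exponent, we work with $\|e_t\|$ directly rather than $\|e_t\|^2$. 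Since $\frac{d}{dt}\|e_t\|\le \|\dot e_t\|$ (in the upper Dini-derivative sense), we get
\[
\frac{d}{dt}\|e_t\|\le L_\theta\|e_t\|+\|\delta_t\|,
\qquad\text{hence}\qquad
\|e_1\|\le \int_0^1 e^{L_\theta(1-s)}\|\delta_s\|\,ds\le e^{L_\theta}\int_0^1\|\delta_s\|\,ds .
\]

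To finish, Cauchy–Schwarz in time gives
\[
\Bigl(\int_0^1\|\delta_s\|\,ds\Bigr)^2\le \int_0^1\|\delta_s\|^2\,ds,
\]
so $\|e_1\|^2\le e^{2L_\theta}\int_0^1\|\delta_s\|^2ds$ pathwise. Taking expectations, and noting that $\mathcal E_{\mathrm{RAFM}}(\theta)=\mathbb E\int_0^1\|\delta_s\|^2ds$ by definition, we obtain $\mathbb E\|e_1\|^2\le e^{2L_\theta}\mathcal E_{\mathrm{RAFM}}(\theta)$. Since $e_1=Y_1-X_1$, combining with the coupling bound yields $W_2\le e^{L_\theta}\mathcal E_{\mathrm{RAFM}}^{1/2}$.

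The main subtlety is that $X_t$ does not follow $v_\theta$ and is not even driven by a deterministic field. It is nevertheless a pathwise absolutely continuous curve with velocity $\partial_t\psi_t$, and a pathwise Grönwall argument only needs that. This is why the bound is stated with the conditional regression error rather than with the marginal field $v^\star$, and no continuity-equation argument is required. The remaining care points are measurability and integrability, plus using the nonsquared Grönwall so that the constant comes out as $e^{L_\theta}$ rather than, say, $e^{L_\theta+1/2}$. If we worked with $\|e_t\|^2$ instead, Young's inequality would produce a worse constant, so the norm-based differential inequality is the route to commit to.
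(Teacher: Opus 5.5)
Your proposal is correct and is essentially the paper's proof. It uses the same synchronous coupling $Y_0=X_0$ from the matched-radius pair, the same non-squared Gr\"onwall bound $\|Y_1-X_1\|\le e^{L_\theta}\int_0^1\|v_\theta(s,X_s)-\partial_s\psi_s(X_0,X_1)\|\,ds$, Jensen (your Cauchy--Schwarz) in time, and $W_2^2\le\mathbb E\|Y_1-X_1\|^2$ via the coupling $(Y_1,X_1)$. Your Dini-derivative and antipodal-null-set remarks are slightly more careful than the paper, which instead guarantees a global flow through the explicit linear-growth condition in Assumption~\ref{app:ass:learned_field_regular} rather than from Lipschitzness alone.
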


Theorem~\ref{thm:generation_stability_main} should be read primarily as a stability result: accurate regression of the RAFM target field implies accurate generation in Wasserstein distance, with a sensitivity controlled by the regularity of the learned flow. The factor $e^{L_\theta}$ is the standard Gr\"onwall-type amplification term that appears when propagating vector-field approximation errors through a Lipschitz ODE flow. In particular, it quantifies how local regression errors may grow along trajectories under the learned dynamics. The theorem therefore clarifies how target-field approximation error translates into generation error, with the flow regularity determining the degree of amplification along trajectories. Combined with Proposition~\ref{prop:tangential_preserves_norm_main}, the result also clarifies the radial--angular factorization of RAFM: the source fixes the radial law, the ideal path is tangent to matched-radius spheres, and the remaining approximation burden is primarily angular. Full proofs are given in Appendix~\ref{app:stability}.

\section{Experiments}
\label{sec:experiments}

\begin{figure}[t]
    \centering
    \includegraphics[width=0.8\linewidth]{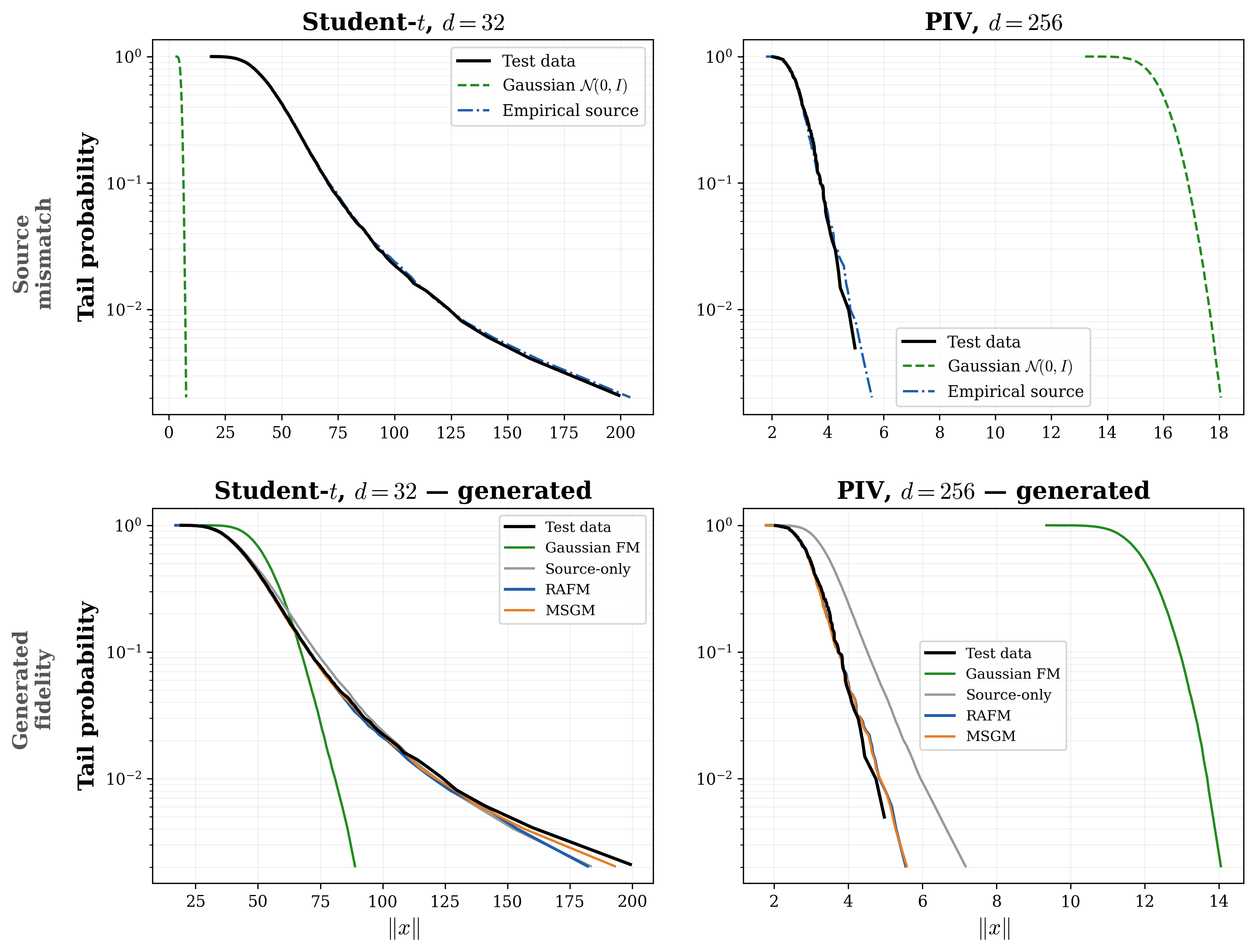}
    \caption{Radial source mismatch and generated radial fidelity on two representative hard regimes. The top row compares the test radial law with the Gaussian and empirical radial sources; the Gaussian source is strongly mismatched, whereas the empirical source closely follows the data. The bottom row shows the radial tail distributions of generated samples: Gaussian FM inherits poor radial fidelity, source-only already recovers much of the gap, and RAFM further improves the match to the target radial law. On Student-$t$ ($d=32$), MSGM remains slightly stronger on radial fidelity, whereas on PIV ($d=256$), RAFM outperforms MSGM while remaining substantially faster in wall-clock time.}
    \label{fig:radial_tail_diagnostics}
\end{figure}

We evaluate whether adapting both the source distribution and the conditional path improves Flow Matching on non-Gaussian data. Our experiments are designed to isolate two effects: the benefit of correcting the radial law at the source, and the additional benefit of radius-preserving spherical transport once radii are matched. We therefore compare standard Gaussian Flow Matching, source-corrected variants, and the recent multiplicative diffusion baseline MSGM.

\subsection{Experimental setup}

We consider both synthetic and real datasets spanning increasing dimensionality and different levels of radial mismatch. Our synthetic benchmarks contain $50{,}000$ samples each and include correlated Student-$t$ distributions with $\nu=3$ in dimensions $d=16$ and $d=32$, generated as
\[
X = ZA^\top,
\qquad
Z_i \overset{\mathrm{i.i.d.}}{\sim} \mathrm{Student}\text{-}t(\nu),
\]
where $A$ is a fixed random mixing matrix, as well as an anisotropic correlated Gaussian control. For real data, we use the same public planar PIV benchmark used by Gruhlke et al.~\cite{gruhlke2025multiplicative}, based on flow over a circular cylinder at Reynolds number $3900$ \cite{DHJXM6_2026}, and evaluate $d=64$ and $d=256$ vorticity representations. Full dataset construction and preprocessing details are deferred to Appendix~\ref{app:reproducibility}.

We compare \emph{Gaussian FM}, \emph{Source-only (empirical)}, \emph{RAFM (empirical)}, and \emph{Multiplicative Score Generative Models} (MSGM) \cite{gruhlke2025multiplicative}. On synthetic datasets, we additionally report \emph{Source-only (oracle)} and \emph{RAFM (oracle)} variants in Appendix~\ref{app:oracle_vs_empirical}. All methods use the same $3$-layer MLP with hidden width $128$. Unless otherwise stated, all models are trained with Adam for $10{,}000$ optimization steps using a common batch size of $256$, evaluated from $10{,}000$ generated samples, and averaged over three independent seeds. RAFM uses tangential projection at inference time. We report radial Wasserstein-$1$, the Kolmogorov--Smirnov (KS) statistic between generated and test radial CDFs, and Sliced Wasserstein-$1$ over $500$ random projections. Full implementation details are given in Appendix~\ref{app:reproducibility}.

\begin{table}[t]
\centering
\small
\setlength{\tabcolsep}{5pt}
\resizebox{\linewidth}{!}{
\begin{tabular}{ll|cccc}
\toprule
Dataset & Method & Radial W1 $\downarrow$ & KS $\downarrow$ & Sliced W1 $\downarrow$ & Train time \\
\midrule
\multirow{4}{*}{Student-$t$ ($d=16$)}
& Gaussian FM   & $3.3415 \scriptstyle{\pm 0.9074}$ & $0.1745 \scriptstyle{\pm 0.0627}$ & $0.7595 \scriptstyle{\pm 0.2315}$ & $18.3 \scriptstyle{\pm 0.2}$ s \\
& Source-only   & $0.3986 \scriptstyle{\pm 0.0444}$ & $0.0207 \scriptstyle{\pm 0.0055}$ & $0.4379 \scriptstyle{\pm 0.0644}$ & $19.0 \scriptstyle{\pm 0.3}$ s \\
& RAFM          & $\mathbf{0.2264} \scriptstyle{\pm 0.0476}$ & $\mathbf{0.0119} \scriptstyle{\pm 0.0029}$ & $\mathbf{0.3316} \scriptstyle{\pm 0.0298}$ & $35.2 \scriptstyle{\pm 0.1}$ s \\
& MSGM          & $0.3760 \scriptstyle{\pm 0.0650}$ & $0.0142 \scriptstyle{\pm 0.0036}$ & $0.4864 \scriptstyle{\pm 0.0279}$ & $49.3 \scriptstyle{\pm 0.3}$ min \\
\midrule
\multirow{4}{*}{Student-$t$ ($d=32$)}
& Gaussian FM   & $8.3012 \scriptstyle{\pm 0.5585}$ & $0.2960 \scriptstyle{\pm 0.0208}$ & $1.5186 \scriptstyle{\pm 0.0261}$ & $18.4 \scriptstyle{\pm 0.2}$ s \\
& Source-only   & $1.4295 \scriptstyle{\pm 0.2491}$ & $0.0369 \scriptstyle{\pm 0.0090}$ & $0.7513 \scriptstyle{\pm 0.2040}$ & $19.2 \scriptstyle{\pm 0.0}$ s \\
& RAFM          & $0.6162 \scriptstyle{\pm 0.0076}$ & $0.0120 \scriptstyle{\pm 0.0008}$ & $\mathbf{0.4749} \scriptstyle{\pm 0.0601}$ & $35.4 \scriptstyle{\pm 0.2}$ s \\
& MSGM          & $\mathbf{0.3747} \scriptstyle{\pm 0.0811}$ & $\mathbf{0.0112} \scriptstyle{\pm 0.0026}$ & $0.6812 \scriptstyle{\pm 0.0210}$ & $51.6 \scriptstyle{\pm 0.5}$ min \\
\midrule
\multirow{4}{*}{PIV ($d=64$)}
& Gaussian FM   & $0.3043 \scriptstyle{\pm 0.0218}$ & $0.2272 \scriptstyle{\pm 0.0032}$ & $0.0463 \scriptstyle{\pm 0.0030}$ & $18.3 \scriptstyle{\pm 0.0}$ s \\
& Source-only   & $0.1017 \scriptstyle{\pm 0.0032}$ & $0.1068 \scriptstyle{\pm 0.0056}$ & $0.0324 \scriptstyle{\pm 0.0046}$ & $19.3 \scriptstyle{\pm 0.0}$ s \\
& RAFM          & $0.0482 \scriptstyle{\pm 0.0019}$ & $\mathbf{0.0469} \scriptstyle{\pm 0.0026}$ & $\mathbf{0.0273} \scriptstyle{\pm 0.0029}$ & $35.8 \scriptstyle{\pm 0.2}$ s \\
& MSGM          & $\mathbf{0.0459} \scriptstyle{\pm 0.0037}$ & $0.0474 \scriptstyle{\pm 0.0039}$ & $0.0539 \scriptstyle{\pm 0.0003}$ & $46.9 \scriptstyle{\pm 0.4}$ min \\
\midrule
\multirow{4}{*}{PIV ($d=256$)}
& Gaussian FM   & $8.9522 \scriptstyle{\pm 0.0243}$ & $1.0000 \scriptstyle{\pm 0.0000}$ & $0.4498 \scriptstyle{\pm 0.0024}$ & $18.5 \scriptstyle{\pm 0.0}$ s \\
& Source-only   & $0.5696 \scriptstyle{\pm 0.0164}$ & $0.3971 \scriptstyle{\pm 0.0123}$ & $0.0382 \scriptstyle{\pm 0.0019}$ & $19.4 \scriptstyle{\pm 0.1}$ s \\
& RAFM          & $\mathbf{0.0371} \scriptstyle{\pm 0.0013}$ & $\mathbf{0.0579} \scriptstyle{\pm 0.0014}$ & $\mathbf{0.0242} \scriptstyle{\pm 0.0030}$ & $35.4 \scriptstyle{\pm 0.1}$ s \\
& MSGM          & $0.0429 \scriptstyle{\pm 0.0030}$ & $0.0665 \scriptstyle{\pm 0.0023}$ & $0.0498 \scriptstyle{\pm 0.0015}$ & $5.39 \scriptstyle{\pm 0.02}$ h \\
\bottomrule
\end{tabular}
}
\caption{Main benchmark results under a harmonized setting with a common batch size of 256 and 10,000 training steps for all methods. RAFM substantially improves over Gaussian FM and source-only baselines. Compared with MSGM, RAFM remains competitive across the main regimes while being dramatically faster in wall-clock time, and achieves its strongest result on PIV ($d=256$), where it outperforms MSGM on all reported metrics.}
\label{tab:main_results}
\end{table}

\subsection{Main results}

Figure~\ref{fig:radial_tail_diagnostics} visualizes this mechanism on two representative hard regimes. The top row shows the source mismatch: in both Student-$t$ ($d=32$) and PIV ($d=256$), the Gaussian radial law is poorly aligned with the test radial distribution, whereas the empirical radial source closely matches it. The bottom row shows the resulting generated radial fidelity. Gaussian FM inherits this mismatch, source-only already recovers a large part of the gap, and RAFM further improves the match on the hardest settings. On Student-$t$ ($d=32$), MSGM remains slightly stronger on radial fidelity, whereas RAFM achieves lower Sliced Wasserstein while training roughly two orders of magnitude faster. On PIV ($d=256$), RAFM outperforms MSGM on all three reported metrics while remaining substantially cheaper to train.

Table~\ref{tab:main_results} reports the main quantitative benchmarks: two heavy-tailed Student-$t$ settings and two real PIV settings. Several trends are consistent across datasets. First, standard Gaussian FM degrades sharply when radial mismatch is substantial, especially in the heavier-tailed and higher-dimensional regimes. Second, replacing only the source already yields large gains, confirming that source mismatch is a major part of the problem. Third, full RAFM further improves over source-only on the hardest regimes, showing that once the radial law is corrected, geometry-aware transport also matters. This is consistent with the theoretical picture developed in the previous section: correcting the radial mismatch removes the dominant source error, after which the path design becomes the next limiting factor.

Compared with MSGM, RAFM is consistently competitive while remaining dramatically cheaper to train under the harmonized 10k-step, batch-size-256 setting. On Student-$t$ ($d=16$), RAFM outperforms MSGM on all three main metrics. On Student-$t$ ($d=32$), the comparison is more nuanced: MSGM is slightly stronger on radial W1 and KS, whereas RAFM achieves substantially better Sliced Wasserstein. On PIV ($d=64$), the two methods are nearly tied on radial W1 and KS, while RAFM improves markedly on Sliced Wasserstein. On PIV ($d=256$), RAFM delivers the strongest overall result, outperforming MSGM on all three reported metrics while training in seconds rather than hours. As expected, on milder regimes such as the anisotropic Gaussian benchmark and low-dimensional PIV, the gains are smaller and source-only already captures most of the improvement. Full secondary-control results are reported in Appendix~\ref{app:experiments}.

\subsection{Projection and additional analyses}

Inference-time tangential projection is mainly a practical stabilizer for difficult regimes. It is not uniformly helpful on the easiest controls, but it becomes increasingly important when radial mismatch and dimensionality grow: removing it substantially degrades radial fidelity on Student-$t$ ($d=32$), PIV ($d=64$), and especially PIV ($d=256$). We report the full ablation in Appendix~\ref{app:experiments}, Table~\ref{tab:full_projection_ablation}. On synthetic datasets, oracle and empirical radial sources remain close, supporting the practical viability of estimating the radial law from training norms (Appendix~\ref{app:oracle_vs_empirical}). Finally, Appendix~\ref{app:toy2d} reports a two-dimensional radial--angular toy that exposes a genuine near-origin low-dimensional failure mode of the current spherical construction. Taken together, these additional analyses support the main conclusion of this section: matching the radial law is the dominant source of improvement, while the spherical path and tangential projection matter most in the hardest non-Gaussian regimes.
\section{Conclusion}

We revisited Flow Matching in the regime where the standard Gaussian source induces a structural radial mismatch with the data. For heavy-tailed or anisotropic distributions, this mismatch is not a minor modeling detail: it forces the transport to spend part of its capacity correcting an artificial discrepancy in norm statistics before modeling the structure that actually characterizes the target distribution.

RAFM addresses this issue by combining a source matched to the data radial law with conditional spherical paths that preserve radius and transport mass mainly through directions. This preserves the standard simulation-free Conditional Flow Matching pipeline while introducing a non-Gaussian inductive bias directly at the level of source and path design.

Our analysis shows that this construction removes the radial KL penalty associated with a Gaussian source, preserves the matched radial structure under tangential dynamics, and links target regression error to generation error through a Wasserstein stability bound. Empirically, the results confirm that correcting the source radial law is the dominant factor of improvement, while spherical transport provides additional gains once the radial mismatch has been removed, especially in the most challenging regimes. Across the main benchmarks, RAFM is consistently competitive with MSGM and often improves upon it, with particularly strong results on Student-$t$ ($d=16$) and PIV ($d=256$), while requiring substantially less training time than MSGM.

More broadly, these results suggest that the source distribution in Flow Matching should be treated as a geometric design choice rather than as a neutral default. When the data exhibit non-Gaussian radial structure, adapting the source and the transport jointly can lead to a better aligned and more efficient generative model. A current limitation of the proposed construction is its fragility in very low-dimensional near-origin regimes, which motivates future work on more robust angular transports and broader non-Gaussian path designs.

\newpage

{
\bibliographystyle{plain}
\bibliography{references}

@inproceedings{sohl2015deep,
  title     = {Deep Unsupervised Learning Using Nonequilibrium Thermodynamics},
  author    = {Sohl-Dickstein, Jascha and Weiss, Eric and Maheswaranathan, Niru and Ganguli, Surya},
  booktitle = {International Conference on Machine Learning (ICML)},
  pages     = {2256--2265},
  year      = {2015},
  organization = {PMLR}
}

@inproceedings{rezende2015variational,
  title     = {Variational Inference with Normalizing Flows},
  author    = {Rezende, Danilo and Mohamed, Shakir},
  booktitle = {International Conference on Machine Learning (ICML)},
  pages     = {1530--1538},
  year      = {2015},
  organization = {PMLR}
}

@article{dinh2016density,
  title   = {Density Estimation Using Real NVP},
  author  = {Dinh, Laurent and Sohl-Dickstein, Jascha and Bengio, Samy},
  journal = {arXiv preprint arXiv:1605.08803},
  year    = {2016}
}

@article{chen2018neural,
  title   = {Neural Ordinary Differential Equations},
  author  = {Chen, Ricky T. Q. and Rubanova, Yulia and Bettencourt, Jesse and Duvenaud, David K.},
  journal = {Advances in Neural Information Processing Systems},
  volume  = {31},
  year    = {2018}
}

@inproceedings{rezende2020normalizing,
  title        = {Normalizing Flows on Tori and Spheres},
  author       = {Rezende, Danilo Jimenez and Papamakarios, George and Racaniere, S{\'e}bastien
                  and Albergo, Michael and Kanwar, Gurtej and Shanahan, Phiala and Cranmer, Kyle},
  booktitle    = {International Conference on Machine Learning (ICML)},
  pages        = {8083--8092},
  year         = {2020},
  organization = {PMLR}
}

@inproceedings{ho2020ddpm,
  title     = {Denoising Diffusion Probabilistic Models},
  author    = {Ho, Jonathan and Jain, Ajay and Abbeel, Pieter},
  booktitle = {Advances in Neural Information Processing Systems (NeurIPS)},
  year      = {2020},
  url       = {https://proceedings.neurips.cc/paper/2020/hash/4c5bcfec8584af0d967f1ab10179ca4b-Abstract.html}
}

@inproceedings{song2021sde,
  title     = {Score-Based Generative Modeling through Stochastic Differential Equations},
  author    = {Song, Yang and Sohl{-}Dickstein, Jascha and Kingma, Durk P. and Kumar, Abhishek
               and Ermon, Stefano and Poole, Ben},
  booktitle = {International Conference on Learning Representations (ICLR)},
  year      = {2021},
  url       = {https://iclr.cc/virtual/2021/oral/3402}
}

@article{de2021diffusion,
  title   = {Diffusion {S}chr{\"o}dinger Bridge with Applications to Score-Based Generative Modeling},
  author  = {De Bortoli, Valentin and Thornton, James and Heng, Jeremy and Doucet, Arnaud},
  journal = {Advances in Neural Information Processing Systems},
  volume  = {34},
  pages   = {17695--17709},
  year    = {2021}
}

@article{karras2022elucidating,
  title   = {Elucidating the Design Space of Diffusion-Based Generative Models},
  author  = {Karras, Tero and Aittala, Miika and Aila, Timo and Laine, Samuli},
  journal = {Advances in Neural Information Processing Systems},
  volume  = {35},
  pages   = {26565--26577},
  year    = {2022}
}

@inproceedings{lipman2023flowmatching,
  title     = {Flow Matching for Generative Modeling},
  author    = {Lipman, Yaron and Chen, Ricky T. Q. and Ben{-}Hamu, Heli and Nickel, Maximilian and Le, Matthew},
  booktitle = {The Eleventh International Conference on Learning Representations (ICLR)},
  year      = {2023},
  publisher = {OpenReview.net},
  url       = {https://openreview.net/forum?id=PqvMRDCJT9t}
}

@inproceedings{liu2023rectifiedflow,
  title     = {Flow Straight and Fast: Learning to Generate and Transfer Data with Rectified Flow},
  author    = {Liu, Xingchao and Gong, Chengyue and Liu, Qiang},
  booktitle = {The Eleventh International Conference on Learning Representations (ICLR)},
  year      = {2023},
  publisher = {OpenReview.net},
  url       = {https://openreview.net/forum?id=XVjTT1nw5z}
}

@article{tong2023simulation,
  title   = {Simulation-Free {S}chr{\"o}dinger Bridges via Score and Flow Matching},
  author  = {Tong, Alexander and Malkin, Nikolay and Fatras, Kilian and Atanackovic, Lazar
             and Zhang, Yanlei and Huguet, Guillaume and Wolf, Guy and Bengio, Yoshua},
  journal = {arXiv preprint arXiv:2307.03672},
  year    = {2023}
}

@article{kerrigan2023functional,
  title   = {Functional Flow Matching},
  author  = {Kerrigan, Gavin and Migliorini, Giosue and Smyth, Padhraic},
  journal = {arXiv preprint arXiv:2305.17209},
  year    = {2023}
}

@article{kornilov2024optimal,
  title   = {Optimal Flow Matching: Learning Straight Trajectories in Just One Step},
  author  = {Kornilov, Nikita and Mokrov, Petr and Gasnikov, Alexander and Korotin, Alexander},
  journal = {Advances in Neural Information Processing Systems},
  volume  = {37},
  pages   = {104180--104204},
  year    = {2024}
}

@article{rozen2021moser,
  title   = {Moser Flow: Divergence-Based Generative Modeling on Manifolds},
  author  = {Rozen, Noam and Grover, Aditya and Nickel, Maximilian and Lipman, Yaron},
  journal = {Advances in Neural Information Processing Systems},
  volume  = {34},
  pages   = {17669--17680},
  year    = {2021}
}

@article{ben2022matching,
  title   = {Matching Normalizing Flows and Probability Paths on Manifolds},
  author  = {Ben-Hamu, Heli and Cohen, Samuel and Bose, Joey and Amos, Brandon and Grover, Aditya
             and Nickel, Maximilian and Chen, Ricky T. Q. and Lipman, Yaron},
  journal = {arXiv preprint arXiv:2207.04711},
  year    = {2022}
}

@article{de2022riemannian,
  title   = {Riemannian Score-Based Generative Modelling},
  author  = {De Bortoli, Valentin and Mathieu, Emile and Hutchinson, Michael and Thornton, James
             and Teh, Yee Whye and Doucet, Arnaud},
  journal = {Advances in Neural Information Processing Systems},
  volume  = {35},
  pages   = {2406--2422},
  year    = {2022}
}

@article{chen2023flow,
  title   = {Flow Matching on General Geometries},
  author  = {Chen, Ricky T. Q. and Lipman, Yaron},
  journal = {arXiv preprint arXiv:2302.03660},
  year    = {2023}
}

@article{kapusniak2024metric,
  title   = {Metric Flow Matching for Smooth Interpolations on the Data Manifold},
  author  = {Kapu{\'s}niak, Kacper and Potaptchik, Peter and Reu, Teodora and Zhang, Leo
             and Tong, Alexander and Bronstein, Michael and Bose, Avishek J. and Di Giovanni, Francesco},
  journal = {Advances in Neural Information Processing Systems},
  volume  = {37},
  pages   = {135011--135042},
  year    = {2024}
}

@inproceedings{jaini2020tails,
  title        = {Tails of {L}ipschitz Triangular Flows},
  author       = {Jaini, Priyank and Kobyzev, Ivan and Yu, Yaoliang and Brubaker, Marcus},
  booktitle    = {International Conference on Machine Learning (ICML)},
  pages        = {4673--4681},
  year         = {2020},
  organization = {PMLR}
}

@inproceedings{stimper2022resampling,
  title        = {Resampling Base Distributions of Normalizing Flows},
  author       = {Stimper, Vincent and Sch{\"o}lkopf, Bernhard and Hern{\'a}ndez-Lobato, Jos{\'e} Miguel},
  booktitle    = {International Conference on Artificial Intelligence and Statistics (AISTATS)},
  pages        = {4915--4936},
  year         = {2022},
  organization = {PMLR}
}

@inproceedings{laszkiewicz2022marginal,
  title        = {Marginal Tail-Adaptive Normalizing Flows},
  author       = {Laszkiewicz, Mike and Lederer, Johannes and Fischer, Asja},
  booktitle    = {International Conference on Machine Learning (ICML)},
  pages        = {12020--12048},
  year         = {2022},
  organization = {PMLR}
}

@article{hickling2024flexible,
  title   = {Flexible Tails for Normalizing Flows},
  author  = {Hickling, Tennessee and Prangle, Dennis},
  journal = {arXiv preprint arXiv:2406.16971},
  year    = {2024}
}

@article{pandey2024heavy,
  title   = {Heavy-Tailed Diffusion Models},
  author  = {Pandey, Kushagra and Pathak, Jaideep and Xu, Yilun and Mandt, Stephan
             and Pritchard, Michael and Vahdat, Arash and Mardani, Morteza},
  journal = {arXiv preprint arXiv:2410.14171},
  year    = {2024}
}

@inproceedings{gruhlke2025multiplicative,
  title     = {Multiplicative Diffusion Models: Beyond {G}aussian Latents},
  author    = {Gruhlke, Robert and Resseguier, Valentin and Makougne Merveille, Cyndie Talla},
  booktitle = {The Fourteenth International Conference on Learning Representations (ICLR)},
  year      = {2025}
}

@article{cont2001empirical,
  title     = {Empirical Properties of Asset Returns: Stylized Facts and Statistical Issues},
  author    = {Cont, Rama},
  journal   = {Quantitative Finance},
  volume    = {1},
  number    = {2},
  pages     = {223},
  year      = {2001},
  publisher = {IOP Publishing}
}

@article{papalexiou2013extreme,
  title     = {How Extreme is Extreme? An Assessment of Daily Rainfall Distribution Tails},
  author    = {Papalexiou, S. M. and Koutsoyiannis, D. and Makropoulos, C.},
  journal   = {Hydrology and Earth System Sciences},
  volume    = {17},
  number    = {2},
  pages     = {851--862},
  year      = {2013},
  publisher = {Copernicus Publications G{\"o}ttingen, Germany}
}

@data{DHJXM6_2026,
  title     = {{Non-time-resolved PIV Dataset of Flow over a Circular Cylinder at Reynolds Number 3900}},
  author    = {GEORGEAULT, Philippe and HEITZ, Dominique},
  publisher = {Recherche Data Gouv},
  year      = {2026},
  version   = {V1},
  doi       = {10.57745/DHJXM6},
  url       = {https://doi.org/10.57745/DHJXM6}
}

@article{vershynin2020high,
  title   = {High-Dimensional Probability},
  author  = {Vershynin, Roman},
  journal = {University of California, Irvine},
  volume  = {10},
  number  = {11},
  pages   = {31},
  year    = {2020}
}
}

\newpage
\appendix
\appendix

\section{Additional theory for RAFM}
\label{app:theory}

\subsection{Properties of the radial source}
\label{app:radial_source}

This appendix complements Section~\ref{subsec:radial_source}. We collect here the formal properties
of the radial source construction, its comparison with a Gaussian source, and the corresponding
empirical extension.

\paragraph{Polar notation.}
Let $X\sim p_{\mathrm{data}}$ be a random vector in $\mathbb{R}^d$, with $d\ge 2$, and assume
$p_{\mathrm{data}}(\{0\})=0$. We write
\[
R:=\|X\|\in \mathbb{R}_+,
\qquad
U:=\frac{X}{\|X\|}\in S^{d-1}.
\]
We denote by $p_R$ the density of $R$ on $\mathbb{R}_+$, by $\sigma$ the surface measure on
$S^{d-1}$, and by $p_{U\mid R}(\cdot\mid r)$ the conditional angular density with respect to $\sigma$.
Under polar coordinates $x=ru$, the Lebesgue measure decomposes as
\[
dx = r^{d-1}\,dr\,d\sigma(u).
\]

\begin{proposition}[Density of the radial source]
\label{app:prop:radial_source_density}
Let
\[
X_0 = R\,U_0,
\qquad
R\sim p_R,
\qquad
U_0\sim \mathrm{Unif}(S^{d-1}),
\qquad
R \perp U_0.
\]
Then the law $q_{\mathrm{rad}}$ of $X_0$ is absolutely continuous on $\mathbb{R}^d\setminus\{0\}$,
with density
\[
q_{\mathrm{rad}}(x)
=
\frac{p_R(\|x\|)}{|S^{d-1}|\,\|x\|^{d-1}},
\qquad x\neq 0.
\]
\end{proposition}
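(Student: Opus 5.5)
We verify the density by testing against an arbitrary bounded measurable function and using the polar decomposition $dx = r^{d-1}\,dr\,d\sigma(u)$ recalled above. Fix a bounded measurable $f:\mathbb{R}^d\to\mathbb{R}$. Since $R$ and $U_0$ are independent, with $R$ having density $p_R$ on $\mathbb{R}_+$ and $U_0$ having density $1/|S^{d-1}|$ with respect to $\sigma$, Fubini gives
\[
\mathbb{E}[f(X_0)]
=
\int_0^\infty\!\!\int_{S^{d-1}} f(ru)\,\frac{p_R(r)}{|S^{d-1}|}\,d\sigma(u)\,dr .
\]

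The key step is to rewrite this as an integral over $\mathbb{R}^d\setminus\{0\}$. On $(0,\infty)$ we multiply and divide by $r^{d-1}$, which is legitimate since $r>0$. This gives
\[
\mathbb{E}[f(X_0)]
=
\int_0^\infty\!\!\int_{S^{d-1}} f(ru)\,\frac{p_R(r)}{|S^{d-1}|\,r^{d-1}}\,r^{d-1}\,d\sigma(u)\,dr
=
\int_{\mathbb{R}^d\setminus\{0\}} f(x)\,\frac{p_R(\|x\|)}{|S^{d-1}|\,\|x\|^{d-1}}\,dx .
\]
Here we use that the polar map $(0,\infty)\times S^{d-1}\to\mathbb{R}^d\setminus\{0\}$, $(r,u)\mapsto ru$, is a bijection under which Lebesgue measure pulls back to $r^{d-1}\,dr\,d\sigma(u)$, and that $\|ru\|=r$.

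It remains to handle the origin. Because $p_R$ is a density, $\mathbb{P}(R=0)=0$, so $\mathbb{P}(X_0=0)=0$. The origin also has Lebesgue measure zero. Hence the identity above characterizes the law $q_{\mathrm{rad}}$ completely. Taking $f=\mathbf 1_A$ for Borel sets $A$ shows that $q_{\mathrm{rad}}$ is absolutely continuous with the stated density. Nonnegativity and measurability of the density are immediate, and the choice $f\equiv 1$ confirms that it integrates to $1$.

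The main technical point is justifying the polar change of variables for the measure $dx = r^{d-1}dr\,d\sigma$; I take this as the standard fact stated in the polar notation paragraph. The only other care needed is the division by $r^{d-1}$. This is where we restrict to $x\neq 0$ and invoke $p_R(\{0\})=0$, which follows from the assumption $p_{\mathrm{data}}(\{0\})=0$. The case $d\ge 2$ poses no further issue; for $d=1$ the same argument works with $S^0=\{\pm1\}$ and counting measure.
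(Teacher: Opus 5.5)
Your proof is correct and follows essentially the same route as the paper: you test against a bounded measurable $f$, use the independence of $R$ and $U_0$ to write $\mathbb{E}[f(X_0)]$ as an integral over $(0,\infty)\times S^{d-1}$, and apply the polar decomposition $dx=r^{d-1}\,dr\,d\sigma(u)$ to read off the density. Your remarks on the null set $\{0\}$ and on the case $d=1$ are harmless refinements that the paper leaves implicit.
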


\begin{proof}
Let $f:\mathbb{R}^d\to\mathbb{R}$ be bounded measurable. Using polar coordinates and the independence
of $R$ and $U_0$,
\[
\mathbb{E}[f(X_0)]
=
\int_0^\infty \int_{S^{d-1}} f(ru)\,\frac{1}{|S^{d-1}|}\,d\sigma(u)\,p_R(r)\,dr.
\]
Since $dx=r^{d-1}\,dr\,d\sigma(u)$, this can be rewritten as
\[
\mathbb{E}[f(X_0)]
=
\int_{\mathbb{R}^d\setminus\{0\}}
f(x)\,
\frac{p_R(\|x\|)}{|S^{d-1}|\,\|x\|^{d-1}}
\,dx.
\]
Hence the density of $q_{\mathrm{rad}}$ is exactly the claimed expression.
\end{proof}

\begin{proposition}[Exact radial preservation]
\label{app:prop:radial_preservation}
Let $X\sim p_{\mathrm{data}}$ and $X_0\sim q_{\mathrm{rad}}$. Then
\[
\|X_0\| \overset{d}{=} \|X\|.
\]
In particular, for every $t\ge 0$,
\[
\mathbb{P}(\|X_0\|>t)=\mathbb{P}(\|X\|>t).
\]
\end{proposition}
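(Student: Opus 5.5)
The plan is to compute the law of $\|X_0\|$ directly from the construction $X_0=R\,U_0$ and compare it with the law of $R=\|X\|$. Since $U_0\in S^{d-1}$ almost surely, $\|U_0\|=1$. Because $R\ge 0$, we get $\|X_0\|=|R|\,\|U_0\|=R$ pointwise on the underlying probability space. Here $R\sim p_R$ is, by definition, distributed as the norm of a data sample. Hence $\|X_0\|$ has law $p_R$, which is exactly the law of $\|X\|$ for $X\sim p_{\mathrm{data}}$. This gives $\|X_0\|\overset{d}{=}\|X\|$.

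As a cross-check, I would also derive the result from Proposition~\ref{app:prop:radial_source_density}. For bounded measurable $g:\mathbb{R}_+\to\mathbb{R}$, integrate $g(\|x\|)\,q_{\mathrm{rad}}(x)$ in polar coordinates $dx=r^{d-1}\,dr\,d\sigma(u)$. The factor $r^{d-1}$ cancels against $\|x\|^{d-1}$, and the $\sigma$-integral cancels $|S^{d-1}|$. This yields $\int_0^\infty g(r)p_R(r)\,dr=\mathbb{E}[g(\|X\|)]$. The cross-check is useful because it holds for any random vector with density $q_{\mathrm{rad}}$, not only for the specific representation $R\,U_0$.

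Finally, take $g=\mathbf{1}_{(t,\infty)}$ for $t\ge 0$ to obtain the tail identity $\mathbb{P}(\|X_0\|>t)=\mathbb{P}(\|X\|>t)$. Tail equality for all $t$ is equivalent to equality of CDFs, so the two statements are consistent.

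There is no real obstacle. The only point needing care is the origin: we assumed $p_{\mathrm{data}}(\{0\})=0$, so $R>0$ almost surely, and the density formula is stated on $\mathbb{R}^d\setminus\{0\}$. I would note that $\{0\}$ is null under both laws, so it does not affect the distributional identity.
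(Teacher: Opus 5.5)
Your proposal is correct and matches the paper's proof. Both observe that $\|X_0\|=R\,\|U_0\|=R$ almost surely, where $R\sim p_R$ is by definition the law of $\|X\|$, and the tail identity then follows at once. Your polar-coordinates cross-check via the density of $q_{\mathrm{rad}}$ is valid but not needed, since the law of $\|X_0\|$ is determined by the law of $X_0$ alone.
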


\begin{proof}
By construction, $X_0=R\,U_0$ with $\|U_0\|=1$ almost surely, so
\[
\|X_0\| = R.
\]
Since $R$ is exactly the norm of a sample from $p_{\mathrm{data}}$, the claim follows.
\end{proof}

\begin{proposition}[Gaussian under-dispersion for regularly varying radial tails]
\label{app:prop:gaussian_tail_mismatch}
Let $Z\sim \mathcal{N}(0,I_d)$ and assume that the data radial tail is regularly varying:
\[
\mathbb{P}(R>t)=t^{-\alpha}L(t)
\qquad \text{as } t\to\infty,
\]
for some $\alpha>0$ and some slowly varying function $L$. Then
\[
\frac{\mathbb{P}(\|Z\|>t)}{\mathbb{P}(R>t)} \to 0
\qquad \text{as } t\to\infty.
\]
\end{proposition}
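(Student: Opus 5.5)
The approach is to compare the Gaussian radial tail, which is sub-exponential in $t^2$, with a regularly varying tail, which decays only polynomially up to a slowly varying factor. The argument has two ingredients: an explicit upper bound on $\mathbb{P}(\|Z\|>t)$, and a lower bound on $t^{-\alpha}L(t)$ that holds for large $t$.

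For the Gaussian side, I would use a Chernoff bound on $\|Z\|^2\sim\chi^2_d$. For $\lambda\in(0,1/2)$,
\[
\mathbb{P}(\|Z\|>t)=\mathbb{P}(\|Z\|^2>t^2)\le e^{-\lambda t^2}\,\mathbb{E}\bigl[e^{\lambda\|Z\|^2}\bigr]=e^{-\lambda t^2}(1-2\lambda)^{-d/2}.
\]
Taking $\lambda=1/4$ gives
\[
\mathbb{P}(\|Z\|>t)\le 2^{d/2}e^{-t^2/4}
\]
for all $t\ge 0$. The upper incomplete-gamma tail of the $\chi_d$ density would give the same kind of bound.

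For the data side, I would invoke the standard Potter-type property of slowly varying functions: for every $\varepsilon>0$, $t^{\varepsilon}L(t)\to\infty$ as $t\to\infty$. In particular, $L(t)\ge t^{-\varepsilon}$ for all $t$ large enough. This follows from the representation theorem for slowly varying functions, $L(t)=c(t)\exp\bigl(\int_{a}^t \eta(s)\,ds/s\bigr)$ with $c(t)\to c>0$ and $\eta(s)\to 0$. I would cite Bingham--Goldie--Teugels for this rather than reprove it. Taking $\varepsilon=1$, we get $\mathbb{P}(R>t)\ge t^{-\alpha-1}$ for $t\ge t_0$, and in particular $\mathbb{P}(R>t)>0$ there, so the ratio is well defined.

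Combining the two bounds, for $t\ge t_0$,
\[
0\le \frac{\mathbb{P}(\|Z\|>t)}{\mathbb{P}(R>t)}\le 2^{d/2}\,t^{\alpha+1}e^{-t^2/4}\longrightarrow 0,
\]
because a Gaussian factor beats any polynomial. This proves the claim.

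The only nontrivial step is the lower bound on $L$. Everything else is elementary, and that step is a standard fact about regular variation.
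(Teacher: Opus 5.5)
Your proof is correct and follows the paper's route: bound the $\chi_d$ tail from above, use the fact that a slowly varying $L$ cannot decay faster than any power, and let the Gaussian factor dominate the resulting polynomial. Your Chernoff bound $2^{d/2}e^{-t^2/4}$ is cruder than the paper's $C_d\,t^{d-2}e^{-t^2/2}$ but is derived rather than quoted. You also state the direction actually needed, $L(t)\ge t^{-\varepsilon}$; the paper's phrase ``$L$ grows sub-polynomially'' suggests an upper bound on $L$, whereas it is a lower bound that controls the ratio.
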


\begin{proof}
The Euclidean norm of a standard Gaussian has a $\chi_d$ distribution, whose tail satisfies
\[
\mathbb{P}(\|Z\|>t)\le C_d\, t^{d-2}e^{-t^2/2}
\]
for all sufficiently large $t$, for some constant $C_d>0$. Therefore
\[
\frac{\mathbb{P}(\|Z\|>t)}{\mathbb{P}(R>t)}
\le
C_d \,\frac{t^{d-2}e^{-t^2/2}}{t^{-\alpha}L(t)}
=
C_d\,\frac{t^{d-2+\alpha}e^{-t^2/2}}{L(t)}.
\]
Since $L$ is slowly varying, it grows sub-polynomially, while the factor $e^{-t^2/2}$ dominates any
polynomial. Hence the right-hand side converges to $0$.
\end{proof}

\begin{theorem}[KL decomposition for the ideal radial source]
\label{app:thm:radial_kl}
Assume that the relevant conditional densities exist and that the KL divergences below are finite.
Then
\[
\mathrm{KL}(p_{\mathrm{data}}\|q_{\mathrm{rad}})
=
\mathbb{E}_{r\sim p_R}
\Big[
\mathrm{KL}\bigl(p_{U\mid R}(\cdot\mid r)\,\|\,\mathrm{Unif}(S^{d-1})\bigr)
\Big],
\]
whereas
\[
\mathrm{KL}(p_{\mathrm{data}}\|\phi_d)
=
\mathrm{KL}(p_R\|p_{\chi_d})
+
\mathbb{E}_{r\sim p_R}
\Big[
\mathrm{KL}\bigl(p_{U\mid R}(\cdot\mid r)\,\|\,\mathrm{Unif}(S^{d-1})\bigr)
\Big].
\]
Consequently,
\[
\mathrm{KL}(p_{\mathrm{data}}\|q_{\mathrm{rad}})
\le
\mathrm{KL}(p_{\mathrm{data}}\|\phi_d),
\]
with equality if and only if $p_R=p_{\chi_d}$ almost everywhere.
\end{theorem}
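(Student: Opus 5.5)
The approach is the chain rule for KL divergence in polar coordinates. Both reference measures ($q_{\mathrm{rad}}$ and $\phi_d$) are rotation invariant. Their pushforwards under $x\mapsto(\|x\|,x/\|x\|)$ therefore factor as a product of a radial law and $\mathrm{Unif}(S^{d-1})$. The data law disintegrates as $p_R(r)\,p_{U\mid R}(u\mid r)$ with respect to $dr\,d\sigma(u)$.

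\textbf{Step 1: move to polar coordinates.} The polar map $T(x)=(\|x\|,x/\|x\|)$ is a bijection from $\mathbb{R}^d\setminus\{0\}$ onto $(0,\infty)\times S^{d-1}$. All measures involved give zero mass to the origin, since $p_{\mathrm{data}}(\{0\})=0$ by assumption and the other two are absolutely continuous. KL divergence is invariant under bijective measurable maps applied to both arguments. Hence
\[
\mathrm{KL}(p_{\mathrm{data}}\|q)=\mathrm{KL}(T_\#p_{\mathrm{data}}\|T_\#q).
\]
Using $dx=r^{d-1}dr\,d\sigma(u)$ together with Proposition~\ref{app:prop:radial_source_density}, the density of $T_\#q_{\mathrm{rad}}$ with respect to $dr\,d\sigma$ is $p_R(r)/|S^{d-1}|$. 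The same computation applied to $\phi_d(x)=(2\pi)^{-d/2}e^{-\|x\|^2/2}$ gives
\[
r^{d-1}(2\pi)^{-d/2}e^{-r^2/2}=p_{\chi_d}(r)/|S^{d-1}|,
\]
which is just the identity $p_{\chi_d}(r)=|S^{d-1}|r^{d-1}(2\pi)^{-d/2}e^{-r^2/2}$. The data pushforward has density $p_R(r)\,p_{U\mid R}(u\mid r)$.

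\textbf{Step 2: chain rule.} Write the log-ratio for a generic reference of the form $g(r)/|S^{d-1}|$:
\[
\log\frac{p_R(r)p_{U\mid R}(u\mid r)}{g(r)\,|S^{d-1}|^{-1}}=\log\frac{p_R(r)}{g(r)}+\log\frac{p_{U\mid R}(u\mid r)}{|S^{d-1}|^{-1}}.
\]
Integrate this against $p_R(r)p_{U\mid R}(u\mid r)\,dr\,d\sigma$. The first term integrates to $\mathrm{KL}(p_R\|g)$, because $p_{U\mid R}$ integrates to one in $u$. The second term integrates to $\mathbb{E}_{r\sim p_R}\mathrm{KL}(p_{U\mid R}(\cdot\mid r)\|\mathrm{Unif})$.

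With $g=p_R$ the radial term vanishes, which gives the first identity. With $g=p_{\chi_d}$ we get the second identity.

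\textbf{Step 3: consequence.} Subtracting the two identities gives
\[
\mathrm{KL}(p_{\mathrm{data}}\|\phi_d)-\mathrm{KL}(p_{\mathrm{data}}\|q_{\mathrm{rad}})=\mathrm{KL}(p_R\|p_{\chi_d})\ge 0.
\]
Equality holds if and only if $p_R=p_{\chi_d}$ almost everywhere, by Gibbs' inequality.

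\textbf{Main obstacle.} The only delicate point is justifying the splitting of the integral into two separately integrable pieces. Without care one could get an $\infty-\infty$ situation. I would handle it with the finiteness assumptions. The angular integrand has nonnegative conditional integral, so Tonelli applies to its positive part in the sense of conditional KL. The radial term is finite because it is bounded by $\mathrm{KL}(p_{\mathrm{data}}\|\phi_d)$; this follows from the data-processing inequality applied to the map $x\mapsto\|x\|$. With both pieces finite, the split is legitimate. I would also note that the existence of the disintegration $p_{U\mid R}$ with respect to $\sigma$ is part of the stated hypotheses.
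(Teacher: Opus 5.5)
Your proposal is correct and follows essentially the same route as the paper: both write the data and reference densities in polar coordinates, note that each reference density has the form $g(r)/(|S^{d-1}|\,r^{d-1})$ so that the log-ratio splits into a radial and an angular term, and then integrate. Your integrability check (bounding the radial KL via data processing so that the split avoids $\infty-\infty$) is a step the paper's proof skips.
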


\begin{proof}
Under polar coordinates $x=ru$, the data density may be written as
\[
p_{\mathrm{data}}(x)
=
\frac{p_R(r)\,p_{U\mid R}(u\mid r)}{r^{d-1}},
\qquad r>0,\ u\in S^{d-1},
\]
where $p_{U\mid R}(\cdot\mid r)$ is a density with respect to $\sigma$.
By Proposition~\ref{app:prop:radial_source_density},
\[
q_{\mathrm{rad}}(x)
=
\frac{p_R(r)}{|S^{d-1}|\,r^{d-1}}.
\]
Therefore
\[
\log \frac{p_{\mathrm{data}}(x)}{q_{\mathrm{rad}}(x)}
=
\log p_{U\mid R}(u\mid r) + \log |S^{d-1}|.
\]
Integrating against $p_{\mathrm{data}}(x)\,dx = p_R(r)\,p_{U\mid R}(u\mid r)\,dr\,d\sigma(u)$ gives
\[
\mathrm{KL}(p_{\mathrm{data}}\|q_{\mathrm{rad}})
=
\int_0^\infty p_R(r)
\int_{S^{d-1}}
p_{U\mid R}(u\mid r)
\log\!\Big(
p_{U\mid R}(u\mid r)\,|S^{d-1}|
\Big)
\,d\sigma(u)\,dr.
\]
Since the uniform density on the sphere is $|S^{d-1}|^{-1}$ with respect to $\sigma$, this is exactly
\[
\mathbb{E}_{r\sim p_R}
\Big[
\mathrm{KL}\bigl(p_{U\mid R}(\cdot\mid r)\,\|\,\mathrm{Unif}(S^{d-1})\bigr)
\Big].
\]

For the standard Gaussian,
\[
\phi_d(x)
=
\frac{p_{\chi_d}(r)}{|S^{d-1}|\,r^{d-1}},
\]
since the Gaussian is conditionally uniform in direction at fixed radius. Hence
\[
\log \frac{p_{\mathrm{data}}(x)}{\phi_d(x)}
=
\log \frac{p_R(r)}{p_{\chi_d}(r)}
+
\log p_{U\mid R}(u\mid r)
+
\log |S^{d-1}|.
\]
Integrating again yields
\[
\mathrm{KL}(p_{\mathrm{data}}\|\phi_d)
=
\int_0^\infty p_R(r)\log\frac{p_R(r)}{p_{\chi_d}(r)}\,dr
+
\mathbb{E}_{r\sim p_R}
\Big[
\mathrm{KL}\bigl(p_{U\mid R}(\cdot\mid r)\,\|\,\mathrm{Unif}(S^{d-1})\bigr)
\Big].
\]
The first term is exactly $\mathrm{KL}(p_R\|p_{\chi_d})$, which proves the decomposition.
The inequality and equality condition follow immediately.
\end{proof}

\begin{theorem}[KL decomposition for an empirical radial source]
\label{app:thm:empirical_radial_kl}
Let $\widehat p_R$ be a strictly positive density on the support of $p_R$, and define
\[
\widehat q_{\mathrm{rad}}(x)
=
\frac{\widehat p_R(\|x\|)}{|S^{d-1}|\,\|x\|^{d-1}},
\qquad x\neq 0.
\]
Assume that all KL divergences below are finite. Then
\[
\mathrm{KL}(p_{\mathrm{data}}\|\widehat q_{\mathrm{rad}})
=
\mathrm{KL}(p_R\|\widehat p_R)
+
\mathbb{E}_{r\sim p_R}
\Big[
\mathrm{KL}\bigl(p_{U\mid R}(\cdot\mid r)\,\|\,\mathrm{Unif}(S^{d-1})\bigr)
\Big].
\]
In particular,
\[
\mathrm{KL}(p_{\mathrm{data}}\|\widehat q_{\mathrm{rad}})
-
\mathrm{KL}(p_{\mathrm{data}}\|\phi_d)
=
\mathrm{KL}(p_R\|\widehat p_R)
-
\mathrm{KL}(p_R\|p_{\chi_d}).
\]
\end{theorem}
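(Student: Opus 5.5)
The plan is to reproduce the argument of Theorem~\ref{app:thm:radial_kl} with $p_R$ in the source replaced by $\widehat p_R$, working pointwise in polar coordinates $x=ru$. First, I would write the data density as
\[
p_{\mathrm{data}}(x)=\frac{p_R(r)\,p_{U\mid R}(u\mid r)}{r^{d-1}},
\]
exactly as in the proof of Theorem~\ref{app:thm:radial_kl}. The candidate source is given by its defining formula,
\[
\widehat q_{\mathrm{rad}}(x)=\frac{\widehat p_R(r)}{|S^{d-1}|\,r^{d-1}} .
\]
This is the law of $\widehat R U_0$ with $\widehat R\sim\widehat p_R$, by the same computation as Proposition~\ref{app:prop:radial_source_density}. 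Strict positivity of $\widehat p_R$ on the support of $p_R$ makes the log-ratio well defined $p_{\mathrm{data}}$-almost everywhere. The Jacobian factors $r^{d-1}$ cancel, giving
\[
\log\frac{p_{\mathrm{data}}(x)}{\widehat q_{\mathrm{rad}}(x)}
=\log\frac{p_R(r)}{\widehat p_R(r)}+\log\bigl(p_{U\mid R}(u\mid r)\,|S^{d-1}|\bigr).
\]

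Second, I would integrate this identity against
\[
p_{\mathrm{data}}(x)\,dx=p_R(r)\,p_{U\mid R}(u\mid r)\,dr\,d\sigma(u).
\]
In the first term, the inner integral of $p_{U\mid R}(\cdot\mid r)$ over the sphere equals $1$, so it reduces to $\int p_R\log(p_R/\widehat p_R)\,dr=\mathrm{KL}(p_R\|\widehat p_R)$. The second term is the averaged angular KL against the uniform density $|S^{d-1}|^{-1}$, exactly as in Theorem~\ref{app:thm:radial_kl}.

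The one technical point is splitting the integral of the sum into the sum of integrals. For this I would use the finiteness assumptions. Each angular KL is nonnegative, and the radial KL is nonnegative because both $p_R$ and $\widehat p_R$ are probability densities. Since all three quantities are assumed finite, the positive and negative parts of each log-term are integrable, and Tonelli/Fubini justifies the decomposition.

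Finally, the "in particular" identity follows by subtracting the Gaussian decomposition from Theorem~\ref{app:thm:radial_kl} from the new identity. The common angular term $\mathbb{E}_{r\sim p_R}[\mathrm{KL}(p_{U\mid R}\|\mathrm{Unif})]$ is finite, so it cancels legitimately, leaving $\mathrm{KL}(p_R\|\widehat p_R)-\mathrm{KL}(p_R\|p_{\chi_d})$.

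I expect no serious obstacle. The only care needed is the integrability bookkeeping in the splitting step, which the finiteness hypotheses handle.
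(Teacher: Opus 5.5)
Your proposal is correct and is essentially the paper's proof: the paper also writes the polar log-ratio with $\widehat p_R$ in place of $p_R$ and integrates it against $p_R(r)\,p_{U\mid R}(u\mid r)\,dr\,d\sigma(u)$. The second identity then follows, as you say, by subtracting the Gaussian decomposition of Theorem~\ref{app:thm:radial_kl}. Your integrability step for splitting the integral is an addition that makes explicit what the paper leaves implicit; its full justification is that negative parts are automatically integrable because $\log y\le y-1$, and the finiteness hypotheses then control the positive parts.
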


\begin{proof}
The proof is identical to that of Theorem~\ref{app:thm:radial_kl}, replacing $p_R$ by
$\widehat p_R$ in the source density:
\[
\log \frac{p_{\mathrm{data}}(x)}{\widehat q_{\mathrm{rad}}(x)}
=
\log \frac{p_R(r)}{\widehat p_R(r)}
+
\log p_{U\mid R}(u\mid r)
+
\log |S^{d-1}|.
\]
Integrating against the polar factorization of $p_{\mathrm{data}}$ yields the result.
\end{proof}

\begin{corollary}[Asymptotic recovery of the ideal radial source]
\label{app:cor:empirical_radial_kl_limit}
Assume that $\widehat p_R$ is a sequence of strictly positive density estimators such that
\[
\mathrm{KL}(p_R\|\widehat p_R)\to 0.
\]
Then
\[
\mathrm{KL}(p_{\mathrm{data}}\|\widehat q_{\mathrm{rad}})
\to
\mathrm{KL}(p_{\mathrm{data}}\|q_{\mathrm{rad}}).
\]
\end{corollary}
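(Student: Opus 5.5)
This follows directly from Theorem~\ref{app:thm:empirical_radial_kl}, applied to each member of the sequence $\widehat p_R$. For every estimator in the sequence, the finiteness assumptions of that theorem are in force, and it gives
\[
\mathrm{KL}(p_{\mathrm{data}}\|\widehat q_{\mathrm{rad}})
=
\mathrm{KL}(p_R\|\widehat p_R)
+
\mathbb{E}_{r\sim p_R}
\Big[
\mathrm{KL}\bigl(p_{U\mid R}(\cdot\mid r)\,\|\,\mathrm{Unif}(S^{d-1})\bigr)
\Big].
\]
By Theorem~\ref{app:thm:radial_kl}, the second term on the right equals $\mathrm{KL}(p_{\mathrm{data}}\|q_{\mathrm{rad}})$. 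This term is a fixed number: it depends only on $p_{\mathrm{data}}$ and not on the estimator. Subtracting, we obtain the exact identity
\[
\mathrm{KL}(p_{\mathrm{data}}\|\widehat q_{\mathrm{rad}})-\mathrm{KL}(p_{\mathrm{data}}\|q_{\mathrm{rad}})
=
\mathrm{KL}(p_R\|\widehat p_R)
\]
along the whole sequence.

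The conclusion is then immediate, since the right-hand side tends to $0$ by hypothesis. The identity also shows that the convergence is monotone from above, because the difference is nonnegative.

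The only point needing care is checking that the hypotheses of Theorem~\ref{app:thm:empirical_radial_kl} hold for every estimator in the sequence. Strict positivity of $\widehat p_R$ on the support of $p_R$ is assumed in the statement. The expected angular KL must be finite; this is the standing assumption of Theorem~\ref{app:thm:radial_kl}. Finiteness of $\mathrm{KL}(p_R\|\widehat p_R)$ holds for all sufficiently large indices, since this quantity converges to $0$. Hence the identity holds eventually, which is all a limit statement needs. There is no genuine obstacle beyond making sure the subtraction is never of the form $\infty-\infty$.
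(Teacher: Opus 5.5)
Your proof is correct and follows the paper's own argument: subtract the identity of Theorem~\ref{app:thm:radial_kl} from that of Theorem~\ref{app:thm:empirical_radial_kl}, so the angular term cancels and the gap equals $\mathrm{KL}(p_R\|\widehat p_R)\to 0$, and your check that the subtraction is never of the form $\infty-\infty$ is a welcome extra. One correction to a side remark: nonnegativity of the gap gives convergence \emph{from above}, not \emph{monotone} convergence, because nothing forces $\mathrm{KL}(p_R\|\widehat p_R)$ to decrease along the sequence.
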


\begin{proof}
Subtract the identity of Theorem~\ref{app:thm:radial_kl} from that of
Theorem~\ref{app:thm:empirical_radial_kl}. The angular term cancels, giving
\[
\mathrm{KL}(p_{\mathrm{data}}\|\widehat q_{\mathrm{rad}})
-
\mathrm{KL}(p_{\mathrm{data}}\|q_{\mathrm{rad}})
=
\mathrm{KL}(p_R\|\widehat p_R),
\]
which converges to $0$ by assumption.
\end{proof}

\paragraph{Remark on implementation.}
The KL statements above require a strictly positive radial density estimator. In practice, however,
RAFM can be initialized directly from an empirical CDF or by resampling the observed training
radii, which is often more natural in one dimension. In that case, Wasserstein or CDF-based error
metrics are more appropriate than KL divergence.

\subsection{Statistical properties of the empirical radial source}
\label{app:empirical_radial_source}

This appendix quantifies the approximation error introduced when the radial law is estimated
empirically from training data.

\paragraph{Empirical radial law.}
Let $X_1,\dots,X_n \sim p_{\mathrm{data}}$ be i.i.d. training samples, and define the corresponding radii
\[
R_i := \|X_i\|,
\qquad i=1,\dots,n.
\]
Let $\mu_R$ denote the law of $R=\|X\|$ for $X\sim p_{\mathrm{data}}$, and let
\[
F_R(r) := \mathbb P(R\le r),
\qquad r\ge 0,
\]
be its cumulative distribution function. The empirical radial measure is
\[
\widehat\mu_{R,n} := \frac1n\sum_{i=1}^n \delta_{R_i},
\]
with empirical CDF
\[
\widehat F_{R,n}(r) := \frac1n\sum_{i=1}^n \mathbf 1_{\{R_i\le r\}}.
\]

We define the empirical radial source by
\[
\widehat X_0 := \widehat R\,U_0,
\qquad
\widehat R \sim \widehat\mu_{R,n},
\qquad
U_0 \sim \mathrm{Unif}(S^{d-1}),
\qquad
\widehat R \perp U_0,
\]
and denote its law by $\widehat q_{\mathrm{rad},n}$.

\begin{proposition}[Consistency of the empirical radial CDF]
\label{app:prop:radial_cdf_consistency}
The empirical radial CDF converges uniformly almost surely:
\[
\sup_{r\ge 0}\big|\widehat F_{R,n}(r)-F_R(r)\big|\xrightarrow[n\to\infty]{a.s.}0.
\]
\end{proposition}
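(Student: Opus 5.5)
This statement is the Glivenko--Cantelli theorem applied to the one-dimensional i.i.d.\ sample $R_1,\dots,R_n$. The plan is either to invoke it directly or to reproduce its standard short proof. First, observe that $R_i=\|X_i\|$ are i.i.d.\ with common law $\mu_R$, since each is a measurable function of an independent copy $X_i\sim p_{\mathrm{data}}$. Hence, for each fixed $r\ge 0$, the indicators $\mathbf 1_{\{R_i\le r\}}$ are i.i.d.\ Bernoulli$(F_R(r))$. The strong law of large numbers then gives $\widehat F_{R,n}(r)\to F_R(r)$ almost surely. Applying the same argument to $\mathbf 1_{\{R_i< r\}}$ gives $\widehat F_{R,n}(r^-)\to F_R(r^-)$ almost surely.

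To upgrade pointwise convergence to uniform convergence, I would use the usual finite grid argument. Fix $k\in\mathbb N$ and choose points $0= r_0<r_1<\dots<r_k=\infty$, where $r_j := \inf\{r: F_R(r)\ge j/k\}$ for $1\le j<k$. These satisfy $F_R(r_j^-)-F_R(r_{j-1})\le 1/k$. Here we use that $F_R$ is nondecreasing and right-continuous, and that $R\ge 0$, so $F_R(0^-)=0$. On the countably many events where the SLLN holds at each $r_j$ and at each left limit $r_j^-$, intersected over $j$ and over $k\in\mathbb N$, we still have a probability-one event.

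On that event, for $r\in[r_{j-1},r_j)$, monotonicity of both $F_R$ and $\widehat F_{R,n}$ gives
\[
\widehat F_{R,n}(r)-F_R(r)\le \widehat F_{R,n}(r_j^-)-F_R(r_j^-)+\tfrac1k
\]
and
\[
\widehat F_{R,n}(r)-F_R(r)\ge \widehat F_{R,n}(r_{j-1})-F_R(r_{j-1})-\tfrac1k .
\]
Taking the maximum over the finitely many grid points and letting $n\to\infty$ yields
\[
\limsup_n \sup_{r\ge0}|\widehat F_{R,n}(r)-F_R(r)|\le 1/k .
\]
Letting $k\to\infty$ gives the claim.

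The only delicate step is the grid construction when $F_R$ has atoms or flat pieces. This is handled by working with left limits $F_R(r_j^-)$ rather than values. Since the paper assumes $p_R$ is a density, $F_R$ is continuous, and this subtlety disappears entirely. Alternatively, one can simply cite the Glivenko--Cantelli theorem, or the Dvoretzky--Kiefer--Wolfowitz inequality combined with Borel--Cantelli, which gives a quantitative rate.
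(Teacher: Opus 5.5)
Your proposal is correct and matches the paper's proof, which simply invokes the Glivenko--Cantelli theorem for the i.i.d.\ radii $R_i=\|X_i\|$; your grid argument is the standard proof of that theorem, and the DKW-plus-Borel--Cantelli alternative is also valid. One small fix: when $F_R$ has atoms, several $r_j$ can coincide, so the grid should be written as $r_0\le r_1\le\dots\le r_k$ rather than with strict inequalities, and the resulting empty intervals simply drop out of the argument.
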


\begin{proof}
This is the Glivenko--Cantelli theorem applied to the one-dimensional sample $(R_i)_{i=1}^n$.
\end{proof}

\begin{theorem}[Dvoretzky--Kiefer--Wolfowitz bound]
\label{app:thm:radial_dkw}
For every $\varepsilon>0$,
\[
\mathbb P\!\left(
\sup_{r\ge 0}\big|\widehat F_{R,n}(r)-F_R(r)\big|>\varepsilon
\right)
\le
2e^{-2n\varepsilon^2}.
\]
Equivalently, for every $\delta\in(0,1)$, with probability at least $1-\delta$,
\[
\sup_{r\ge 0}\big|\widehat F_{R,n}(r)-F_R(r)\big|
\le
\sqrt{\frac{1}{2n}\log\frac{2}{\delta}}.
\]
\end{theorem}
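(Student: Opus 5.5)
The plan is to reduce the statement to the classical Dvoretzky--Kiefer--Wolfowitz inequality with Massart's sharp constant, applied to the one-dimensional i.i.d.\ sample $R_1,\dots,R_n$. No RAFM-specific structure is needed beyond the observation that the $R_i=\|X_i\|$ are i.i.d.\ with common CDF $F_R$. This holds because the $X_i$ are i.i.d.\ from $p_{\mathrm{data}}$ and $x\mapsto\|x\|$ is measurable.

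First I would note that $\widehat F_{R,n}$ is exactly the empirical CDF of this real-valued sample. Since $R\ge 0$ almost surely, both $F_R$ and $\widehat F_{R,n}$ vanish on $(-\infty,0)$. Hence the supremum over $r\ge 0$ equals the supremum over $r\in\mathbb R$, and it can be bounded by the full-line Kolmogorov--Smirnov statistic. Massart's version of DKW (valid for an arbitrary CDF, continuous or not) then states that for every $\varepsilon>0$,
\[
\mathbb P\Big(\sup_{r\in\mathbb R}|\widehat F_{R,n}(r)-F_R(r)|>\varepsilon\Big)\le 2e^{-2n\varepsilon^2}.
\]
This is the first claim. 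Measurability of the supremum is handled by restricting to rational $r$ together with right-continuity.

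For the high-probability form, I would set $2e^{-2n\varepsilon^2}=\delta$, that is, $\varepsilon=\sqrt{\log(2/\delta)/(2n)}$. Taking complements shows that, with probability at least $1-\delta$, the uniform deviation is at most this $\varepsilon$. The converse direction of the claimed equivalence follows by inverting the same relation for any $\varepsilon>0$ with $\delta=2e^{-2n\varepsilon^2}$. When this gives $\delta\ge 1$, the bound is trivial.

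The only real obstacle is the sharp constant $2$ in front of the exponential, which is Massart's (1990) theorem; DKW's original constant was unspecified. I would simply invoke it as a known result rather than reprove it. If a self-contained argument were required, I would reduce to the uniform case via the quantile transform $R_i=F_R^{-1}(V_i)$ with $V_i\sim\mathrm{Unif}[0,1]$, which only shrinks the supremum for discontinuous $F_R$. I would then cite Massart's bound for the uniform empirical process.
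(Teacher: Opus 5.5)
Your proposal is correct and matches the paper, which proves this by applying the classical Dvoretzky--Kiefer--Wolfowitz inequality (with Massart's constant $2$) to the i.i.d.\ radii $R_i=\|X_i\|$. Your extra remarks fill in details the paper leaves implicit: that the supremum over $r\ge 0$ equals the full-line Kolmogorov--Smirnov statistic, measurability via rational $r$, the quantile-transform reduction for discontinuous $F_R$, and the inversion $\varepsilon=\sqrt{\log(2/\delta)/(2n)}$.
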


\begin{proof}
This is the classical Dvoretzky--Kiefer--Wolfowitz inequality applied to the i.i.d. sample
$(R_i)_{i=1}^n$.
\end{proof}

\begin{proposition}[Transfer from radial estimation error to source estimation error]
\label{app:prop:radial_to_source_transfer}
Let $p\ge 1$. Then
\[
W_p(\widehat q_{\mathrm{rad},n},q_{\mathrm{rad}})
\le
W_p(\widehat\mu_{R,n},\mu_R).
\]
In particular, any convergence of the empirical radial law in Wasserstein distance induces the same
convergence for the corresponding radial source.
\end{proposition}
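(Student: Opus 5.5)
The plan is a direct coupling argument that transports radii in one dimension and shares a single random direction between the two sources. Let $\pi$ be an optimal coupling of $(\widehat\mu_{R,n},\mu_R)$ for the cost $|r-r'|^p$. Such a coupling exists because the empirical measure has finite $p$-th moment; if $\mu_R$ does not, the right-hand side is $+\infty$ and there is nothing to prove. In one dimension we may take the quantile coupling, $(\widehat F_{R,n}^{-1}(V),F_R^{-1}(V))$ with $V\sim\mathrm{Unif}[0,1]$. We then draw $(\widehat R,R)\sim\pi$ together with an independent $U_0\sim\mathrm{Unif}(S^{d-1})$.

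Set $\widehat X_0:=\widehat R\,U_0$ and $X_0:=R\,U_0$. Since $U_0$ is independent of the pair $(\widehat R,R)$, each marginal is correct: $\widehat X_0\sim\widehat q_{\mathrm{rad},n}$ and $X_0\sim q_{\mathrm{rad}}$, both by the definitions of the two sources, which only require the radius to be independent of the uniform direction. The law of $(\widehat X_0,X_0)$ is therefore an admissible coupling of $\widehat q_{\mathrm{rad},n}$ and $q_{\mathrm{rad}}$.

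The key identity is $\|\widehat X_0-X_0\|=|\widehat R-R|\,\|U_0\|=|\widehat R-R|$ almost surely. It follows that
\[
W_p^p(\widehat q_{\mathrm{rad},n},q_{\mathrm{rad}})\le \mathbb E\|\widehat X_0-X_0\|^p=\mathbb E|\widehat R-R|^p=W_p^p(\widehat\mu_{R,n},\mu_R).
\]
Taking $p$-th roots gives the claim.

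The "in particular" statement follows immediately. For example, convergence of $W_p(\widehat\mu_{R,n},\mu_R)$ to $0$ (via Glivenko--Cantelli together with moment convergence, or via known one-dimensional empirical Wasserstein rates) transfers verbatim to the sources.

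The only delicate point is checking that the marginal of $\widehat X_0$ really is $\widehat q_{\mathrm{rad},n}$. This needs $U_0$ to be independent of $\widehat R$, which holds because $U_0$ is independent of the whole pair. The argument is otherwise routine, and the bound is in fact an equality, since $\|\cdot\|$ is 1-Lipschitz and pushes any coupling of the sources down to a coupling of the radial laws.
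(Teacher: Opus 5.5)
Your proof is correct and is essentially the paper's argument: couple the radii, attach a common independent direction $U_0$, and use $\|\widehat R\,U_0 - R\,U_0\| = |\widehat R - R|$. The paper instead takes an arbitrary coupling $\pi$ and then the infimum, so it needs no optimal or quantile coupling, and your closing remark is correct: the 1-Lipschitz map $x\mapsto\|x\|$ gives the reverse inequality, so the bound is in fact an equality.
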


\begin{proof}
Let $\pi$ be any coupling between $\widehat\mu_{R,n}$ and $\mu_R$, and let
$U\sim \mathrm{Unif}(S^{d-1})$ be independent of $(\widehat R,R)\sim\pi$.
Then $(\widehat R U,RU)$ is a coupling between $\widehat q_{\mathrm{rad},n}$ and $q_{\mathrm{rad}}$.
Hence
\[
W_p^p(\widehat q_{\mathrm{rad},n},q_{\mathrm{rad}})
\le
\mathbb E\big[\|\widehat R U-RU\|^p\big]
=
\mathbb E\big[|\widehat R-R|^p\|U\|^p\big]
=
\mathbb E\big[|\widehat R-R|^p\big].
\]
Taking the infimum over all couplings $\pi$ yields the result.
\end{proof}

\begin{corollary}[High-probability control under bounded support]
\label{app:cor:bounded_support_radial_source}
Assume that the radial law is bounded:
\[
\mathbb P(R\le R_{\max})=1
\]
for some $R_{\max}>0$. Then
\[
W_1(\widehat q_{\mathrm{rad},n},q_{\mathrm{rad}})
\le
W_1(\widehat\mu_{R,n},\mu_R)
=
\int_0^{R_{\max}} |\widehat F_{R,n}(r)-F_R(r)|\,dr
\le
R_{\max}\sup_{r\ge 0}|\widehat F_{R,n}(r)-F_R(r)|.
\]
Consequently, for every $\delta\in(0,1)$, with probability at least $1-\delta$,
\[
W_1(\widehat q_{\mathrm{rad},n},q_{\mathrm{rad}})
\le
R_{\max}\sqrt{\frac{1}{2n}\log\frac{2}{\delta}}.
\]
\end{corollary}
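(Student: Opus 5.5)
The corollary follows by chaining three earlier facts, and I would take them in order.

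First, invoke Proposition~\ref{app:prop:radial_to_source_transfer} with $p=1$. This gives $W_1(\widehat q_{\mathrm{rad},n},q_{\mathrm{rad}})\le W_1(\widehat\mu_{R,n},\mu_R)$, which is the first inequality in the statement.

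Second, use the classical one-dimensional representation of $W_1$ as the $L^1$ distance between CDFs:
\[
W_1(\mu,\nu)=\int_{\mathbb R}|F_\mu(r)-F_\nu(r)|\,dr .
\]
I would justify this through the quantile coupling, since in one dimension the monotone rearrangement is optimal. It gives $W_1(\mu,\nu)=\int_0^1|F_\mu^{-1}(s)-F_\nu^{-1}(s)|\,ds$, and a Fubini argument on the region between the two graphs turns this into the CDF form. To restrict the integral to $[0,R_{\max}]$, note the following:
\begin{itemize}
\item Both $\mu_R$ and $\widehat\mu_{R,n}$ are supported on $[0,R_{\max}]$ almost surely. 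The sample radii satisfy $R_i\le R_{\max}$ almost surely, because $\mathbb P(R\le R_{\max})=1$ and countable unions of null events are null.
\item Hence both CDFs vanish for $r<0$ and equal $1$ for $r\ge R_{\max}$, so the integrand is zero outside $[0,R_{\max}]$.
\end{itemize}
This yields the stated equality, holding almost surely.

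Third, bound the integrand pointwise by its supremum, giving a factor of interval length $R_{\max}$:
\[
\int_0^{R_{\max}}|\widehat F_{R,n}(r)-F_R(r)|\,dr\le R_{\max}\sup_{r\ge0}|\widehat F_{R,n}(r)-F_R(r)|.
\]

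Finally, for the high-probability statement, apply Theorem~\ref{app:thm:radial_dkw}. On an event of probability at least $1-\delta$, the sup-norm is at most $\sqrt{\tfrac{1}{2n}\log\tfrac{2}{\delta}}$. Intersecting with the almost-sure support event does not change the probability, and the deterministic chain above then gives the claim.

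The only step with real content is the CDF representation of $W_1$ together with the support truncation. I expect no genuine obstacle there: I would cite it as standard (e.g.\ Vallender's identity) rather than reprove it. I would also remark that measurability of the supremum follows because it can be taken over rationals, by right-continuity of the CDFs.
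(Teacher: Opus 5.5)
Your proposal is correct and follows the paper's proof essentially step for step: Proposition~\ref{app:prop:radial_to_source_transfer} with $p=1$, then the one-dimensional CDF representation of $W_1$ restricted to $[0,R_{\max}]$, then the sup-norm bound, then Theorem~\ref{app:thm:radial_dkw}. Your extra remarks (the almost-sure support of $\widehat\mu_{R,n}$ in $[0,R_{\max}]$ and the measurability of the supremum) are details the paper leaves implicit and do not change the argument.
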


\begin{proof}
For one-dimensional distributions supported on $[0,R_{\max}]$,
\[
W_1(\widehat\mu_{R,n},\mu_R)=\int_0^{R_{\max}} |\widehat F_{R,n}(r)-F_R(r)|\,dr.
\]
Therefore
\[
W_1(\widehat\mu_{R,n},\mu_R)
\le
R_{\max}\sup_{r\ge 0}|\widehat F_{R,n}(r)-F_R(r)|.
\]
Combining this with Proposition~\ref{app:prop:radial_to_source_transfer} and
Theorem~\ref{app:thm:radial_dkw} yields the claim.
\end{proof}

\paragraph{Interpretation.}
The previous results justify the use of an empirical radial source in practice. The empirical radial law
is estimated uniformly from one-dimensional training radii, and the resulting estimation error transfers
directly to the full radial source. Thus, $\widehat q_{\mathrm{rad},n}$ converges to the ideal source
$q_{\mathrm{rad}}$ as the sample size increases.

\subsection{Geometric properties of the spherical path}
\label{app:spherical_path}

This appendix complements Section~\ref{subsec:spherical_paths}. We state the geometric properties
of the spherical path, give the derivative formula, and provide an explicit conditional vector field.

\paragraph{Definition of the path.}
Let $x_0=Ru_0$ and $x_1=Ru_1$ with $R>0$ and $u_0,u_1\in S^{d-1}$. For non-antipodal pairs
$u_0\neq -u_1$, define
\[
\theta := \arccos(\langle u_0,u_1\rangle)\in[0,\pi),
\]
\[
\gamma_t(u_0,u_1)
=
\frac{\sin((1-t)\theta)}{\sin\theta}\,u_0
+
\frac{\sin(t\theta)}{\sin\theta}\,u_1,
\qquad t\in[0,1],
\]
and
\[
\psi_t(x_0,x_1):=R\,\gamma_t(u_0,u_1).
\]

\paragraph{Antipodal completion.}
When $u_0=-u_1$, the minimizing geodesic is not unique. Since $u_0$ is sampled uniformly on the
sphere conditionally on $x_1$, this event has conditional probability zero. Any fixed deterministic rule may be used on the antipodal set without affecting the construction in practice, since this event has probability zero under the sampling scheme.

\begin{proposition}[Geometric properties of the spherical path]
\label{app:prop:spherical_path_properties}
For any non-antipodal pair $(x_0,x_1)$ with $\|x_0\|=\|x_1\|=R$, the path
\[
X_t:=\psi_t(x_0,x_1)
\]
satisfies
\[
X_0=x_0,
\qquad
X_1=x_1,
\qquad
\|X_t\|=R
\quad \forall t\in[0,1].
\]
Moreover, its velocity is tangent to the scaled sphere $R\,S^{d-1}$:
\[
X_t^\top \dot X_t = 0
\qquad \forall t\in[0,1].
\]
\end{proposition}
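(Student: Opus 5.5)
The plan is to verify all claims directly from the slerp formula, after handling the degenerate case $\theta=0$ separately. If $\theta=0$, then $u_0=u_1$. Interpreting the coefficients by continuity, $\sin((1-t)\theta)/\sin\theta\to 1-t$ and $\sin(t\theta)/\sin\theta\to t$, so $\gamma_t=u_0$. Hence $X_t\equiv x_0=x_1$, and every claim is trivial since $\dot X_t=0$. From now on assume $\theta\in(0,\pi)$, so $\sin\theta>0$ and the formula is well defined and smooth in $t$.

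The endpoint conditions come from evaluating at $t=0$ and $t=1$. At $t=0$ the coefficients are $(1,0)$, and at $t=1$ they are $(0,1)$. This gives $\gamma_0=u_0$ and $\gamma_1=u_1$, hence $X_0=Ru_0=x_0$ and $X_1=x_1$.

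For radius preservation, write $a=\sin((1-t)\theta)$ and $b=\sin(t\theta)$, and use $\langle u_0,u_1\rangle=\cos\theta$. Then
\[
\sin^2\theta\,\|\gamma_t\|^2=a^2+b^2+2ab\cos\theta .
\]
The claim reduces to the trigonometric identity $a^2+b^2+2ab\cos\theta=\sin^2\theta$ when $(1-t)\theta+t\theta=\theta$. This is the law-of-cosines identity $\sin^2\alpha+\sin^2\beta+2\sin\alpha\sin\beta\cos(\alpha+\beta)=\sin^2(\alpha+\beta)$. I expect this identity to be the only real computation. I would prove it by expanding $\sin^2(\alpha+\beta)=(\sin\alpha\cos\beta+\cos\alpha\sin\beta)^2$ and $\cos(\alpha+\beta)=\cos\alpha\cos\beta-\sin\alpha\sin\beta$, then simplifying with $\cos^2=1-\sin^2$.

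A cleaner alternative avoids the identity altogether. Set $e:=(u_1-\cos\theta\,u_0)/\sin\theta$, which is a unit vector orthogonal to $u_0$. Rewriting the coefficients with the sine subtraction formula gives $\gamma_t=\cos(t\theta)\,u_0+\sin(t\theta)\,e$. Then $\|\gamma_t\|=1$ is immediate from orthonormality, and so $\|X_t\|=R$. I will present this rewriting as the main route.

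Tangency then follows in either of two ways. The first is to differentiate the identity $\|X_t\|^2=R^2$, which gives $2X_t^\top\dot X_t=0$. The second is to differentiate the rewritten form, $\dot\gamma_t=\theta(-\sin(t\theta)u_0+\cos(t\theta)e)$, and observe that it is orthogonal to $\gamma_t$. Either way the result holds for all $t\in[0,1]$, including the endpoints, because the path is smooth on a neighborhood of $[0,1]$.
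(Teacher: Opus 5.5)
Your proof is correct and follows the same route as the paper: you evaluate the coefficients at $t=0,1$ for the endpoints, use $\gamma_t\in S^{d-1}$ to get $\|X_t\|=R$, and differentiate $\|X_t\|^2=R^2$ for tangency. The differences are two places where you do more than the paper: you actually prove $\|\gamma_t\|=1$ via the orthonormal frame $(u_0,e)$, which gives $\gamma_t=\cos(t\theta)u_0+\sin(t\theta)e$ (the paper just asserts it), and you handle the $0/0$ case $\theta=0$ by continuity (the paper leaves it implicit).
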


\begin{proof}
The endpoint conditions follow from $\gamma_0(u_0,u_1)=u_0$ and $\gamma_1(u_0,u_1)=u_1$.
Since $\gamma_t(u_0,u_1)\in S^{d-1}$ for every $t$, we have
\[
\|X_t\| = R\|\gamma_t(u_0,u_1)\| = R.
\]
Differentiating $\|X_t\|^2=R^2$ gives
\[
2X_t^\top \dot X_t = 0,
\]
hence $X_t^\top \dot X_t=0$.
\end{proof}

\begin{proposition}[Derivative of the spherical interpolation]
\label{app:prop:path_derivative}
For any non-antipodal pair $(u_0,u_1)$,
\[
\partial_t \gamma_t(u_0,u_1)
=
\frac{\theta}{\sin\theta}
\left[
-\cos((1-t)\theta)\,u_0
+
\cos(t\theta)\,u_1
\right].
\]
Consequently,
\[
\partial_t \psi_t(x_0,x_1)
=
R\frac{\theta}{\sin\theta}
\left[
-\cos((1-t)\theta)\,u_0
+
\cos(t\theta)\,u_1
\right].
\]
\end{proposition}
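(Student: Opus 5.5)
The plan is a direct differentiation of the slerp formula, with $\theta$ treated as a constant in $t$. For a fixed non-antipodal pair $(u_0,u_1)$, the angle $\theta=\arccos\langle u_0,u_1\rangle\in[0,\pi)$ depends only on the endpoints. So for $\theta\in(0,\pi)$ the map $t\mapsto\gamma_t(u_0,u_1)$ is a linear combination of the fixed vectors $u_0,u_1$, with smooth scalar coefficients $a(t)=\sin((1-t)\theta)/\sin\theta$ and $b(t)=\sin(t\theta)/\sin\theta$.

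First I will differentiate each coefficient by the chain rule:
\[
a'(t)=-\frac{\theta\cos((1-t)\theta)}{\sin\theta},
\qquad
b'(t)=\frac{\theta\cos(t\theta)}{\sin\theta}.
\]
Substituting gives $\partial_t\gamma_t=a'(t)u_0+b'(t)u_1$, which is exactly the stated formula. The formula for $\psi_t$ then follows by linearity. In the matched-radius setting, $R=\|x_1\|=\|x_0\|$ does not depend on $t$, so $\partial_t\psi_t=R\,\partial_t\gamma_t$.

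The only real subtlety is the degenerate case $\theta=0$, i.e.\ $u_0=u_1$. There the defining formula is the indeterminate form $0/0$. I will interpret it via its continuous extension, using $\sin(c\theta)/\sin\theta\to c$ as $\theta\to0$. This yields $\gamma_t=(1-t)u_0+tu_0=u_0$, a constant path with zero derivative. On the right-hand side, $\theta/\sin\theta\to1$ and the bracket becomes $-u_0+u_1=0$. So the formula holds under the same convention of reading $\theta/\sin\theta$ as $1$ at $\theta=0$.

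As an optional sanity check, I can confirm that the resulting velocity has constant norm $\theta$, consistent with constant-speed geodesic motion. This uses $\langle u_0,u_1\rangle=\cos\theta$ and the identity $\cos^2((1-t)\theta)+\cos^2(t\theta)-2\cos((1-t)\theta)\cos(t\theta)\cos\theta=\sin^2\theta$. The check is not needed for the statement itself.
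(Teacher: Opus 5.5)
Your proposal is correct and follows the paper's proof: differentiate the two scalar coefficients $\sin((1-t)\theta)/\sin\theta$ and $\sin(t\theta)/\sin\theta$ with $\theta$ held fixed, then multiply by the $t$-independent radius $R$. Your handling of the removable singularity at $\theta=0$ and your constant-speed check go slightly beyond the paper, which leaves the degenerate case implicit.
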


\begin{proof}
Differentiate the coefficients of $\gamma_t(u_0,u_1)$ with respect to $t$:
\[
\frac{d}{dt}\frac{\sin((1-t)\theta)}{\sin\theta}
=
-\frac{\theta\cos((1-t)\theta)}{\sin\theta},
\qquad
\frac{d}{dt}\frac{\sin(t\theta)}{\sin\theta}
=
\frac{\theta\cos(t\theta)}{\sin\theta}.
\]
Substituting into the definition of $\gamma_t$ yields the first identity, and multiplying by $R$
gives the second.
\end{proof}

\paragraph{A closed-form conditional vector field.}
For $x,y\in R\,S^{d-1}$, define the geodesic angle
\[
\varphi_R(x,y)
:=
\arccos\!\left(\frac{\langle x,y\rangle}{R^2}\right)\in[0,\pi].
\]
The Riemannian logarithm map on the scaled sphere $R\,S^{d-1}$ is
\[
\mathrm{Log}^{(R)}_x(y)
=
\frac{\varphi_R(x,y)}{\sin(\varphi_R(x,y))}
\left(
y-\frac{\langle x,y\rangle}{R^2}x
\right),
\qquad x\neq -y.
\]
Using $R=\|x_1\|$, define
\[
u_t(x\mid x_1)
=
\frac{1}{1-t}\,\mathrm{Log}^{(R)}_x(x_1),
\qquad
t\in[0,1),\ x\in R\,S^{d-1},\ x\neq -x_1.
\]
Equivalently,
\[
u_t(x\mid x_1)
=
\frac{\varphi_R(x,x_1)}{(1-t)\sin(\varphi_R(x,x_1))}
\left(
x_1-\frac{\langle x,x_1\rangle}{R^2}x
\right).
\]
By construction,
\[
x^\top u_t(x\mid x_1)=0.
\]

\begin{proposition}[Consistency with the spherical path]
\label{app:prop:conditional_vf_consistency}
For any non-antipodal pair $(x_0,x_1)$ and any $t\in[0,1)$,
\[
u_t\!\bigl(\psi_t(x_0,x_1)\mid x_1\bigr)
=
\partial_t \psi_t(x_0,x_1).
\]
Hence $u_t(\cdot\mid x_1)$ generates the conditional path associated with $x_1$.
\end{proposition}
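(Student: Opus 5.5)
The plan is a direct computation at the point $x=\psi_t(x_0,x_1)=R\gamma_t$: express the geodesic angle and the tangential component of $x_1$ there in terms of $\theta$ and $t$, and compare with Proposition~\ref{app:prop:path_derivative}. We treat $\theta=0$ ($u_0=u_1$) separately. In that case the path is constant, $\partial_t\psi_t=0$, and $x=x_1$, so $\varphi_R=0$. With the usual convention $\varphi/\sin\varphi\to 1$, the bracket $x_1-x=0$, so both sides vanish. Henceforth assume $\theta\in(0,\pi)$.

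\textbf{Step 1: the angle.} We compute $\langle \gamma_t,u_1\rangle$ using $\langle u_0,u_1\rangle=\cos\theta$:
\[
\langle\gamma_t,u_1\rangle=\frac{\sin((1-t)\theta)\cos\theta+\sin(t\theta)}{\sin\theta}=\cos((1-t)\theta).
\]
This uses $\sin(t\theta)=\sin(\theta-(1-t)\theta)=\sin\theta\cos((1-t)\theta)-\cos\theta\sin((1-t)\theta)$. Since $(1-t)\theta\in[0,\pi)$, we get $\varphi_R(x,x_1)=\arccos(\langle x,x_1\rangle/R^2)=(1-t)\theta$. Also $x\neq -x_1$, because this angle is strictly less than $\pi$.

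\textbf{Step 2: the tangential part.} We compute $u_1-\cos((1-t)\theta)\gamma_t$ and aim to show it equals
\[
\frac{\sin((1-t)\theta)}{\theta}\cdot\frac{\theta}{\sin\theta}\bigl[-\cos((1-t)\theta)u_0+\cos(t\theta)u_1\bigr].
\]
Expanding, the $u_0$ coefficient is $-\cos((1-t)\theta)\sin((1-t)\theta)/\sin\theta$, which matches directly. The $u_1$ coefficient is $1-\cos((1-t)\theta)\sin(t\theta)/\sin\theta$. It should equal $\sin((1-t)\theta)\cos(t\theta)/\sin\theta$, which is equivalent to $\sin\theta=\sin((1-t)\theta+t\theta)$, the angle-addition formula.

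\textbf{Step 3: assemble.} We plug Steps 1 and 2 into the definition of $u_t(x\mid x_1)$, multiplying through by $R$ since $x=R\gamma_t$ and $x_1=Ru_1$:
\[
u_t(x\mid x_1)=\frac{(1-t)\theta}{(1-t)\sin((1-t)\theta)}\,R\,\frac{\sin((1-t)\theta)}{\sin\theta}\bigl[-\cos((1-t)\theta)u_0+\cos(t\theta)u_1\bigr].
\]
The factors $(1-t)$ and $\sin((1-t)\theta)$ cancel, which is legitimate for $t<1$ and $\theta>0$. What remains is exactly $\partial_t\psi_t$ from Proposition~\ref{app:prop:path_derivative}.

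The final claim follows because $Y_s=\psi_s(x_0,x_1)$ then solves $\dot Y_s=u_s(Y_s\mid x_1)$ with $Y_0=x_0$. So $u_t(\cdot\mid x_1)$ transports the conditional law $\mathrm{Law}(\psi_t(X_0,x_1)\mid x_1)$ by the standard pushforward/continuity-equation argument.

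The main obstacle is only the trigonometric bookkeeping in Step 2, together with the degenerate cases: $\theta=0$, and $t\to1$, where $u_t$ is written as $0/0$. We handle these by continuous extension, which is why the statement is restricted to $t\in[0,1)$.
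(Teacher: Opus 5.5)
Your proposal is correct and follows essentially the same route as the paper. The paper's proof likewise computes the remaining geodesic angle $\varphi_R(X_t,x_1)=(1-t)\theta$ and substitutes it into the logarithm-map formula to recover the expression of Proposition~\ref{app:prop:path_derivative}; you additionally write out the identity for the tangential component $x_1-\frac{\langle x,x_1\rangle}{R^2}x$ and the degenerate case $\theta=0$, both of which the paper leaves implicit.
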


\begin{proof}
Let $X_t=\psi_t(x_0,x_1)=R\gamma_t(u_0,u_1)$. Its remaining geodesic angle to $x_1$ is
\[
\varphi_R(X_t,x_1)
=
\arccos\!\left(\frac{\langle X_t,x_1\rangle}{R^2}\right)
=
(1-t)\theta.
\]
Substituting this identity into the explicit formula for $u_t(x\mid x_1)$ yields exactly the expression
of Proposition~\ref{app:prop:path_derivative}.
\end{proof}

\subsection{Tangential dynamics and generation stability}
\label{app:stability}

This appendix collects the results summarized in Section~\ref{subsec:theory_summary}. We study the
learned flow induced by $v_\theta$ and relate target approximation to generation accuracy.

\paragraph{Learned dynamics.}
Let $v_\theta:[0,1]\times\mathbb{R}^d\to\mathbb{R}^d$ be a learned time-dependent vector field and
consider the ODE
\[
\dot Y_t = v_\theta(t,Y_t),
\qquad
Y_0\sim q_{\mathrm{rad}}.
\]
Whenever the ODE is well posed, we denote by $\Phi_t^\theta$ the associated flow map, so that
\[
Y_t=\Phi_t^\theta(Y_0),
\qquad
\mathrm{Law}(Y_t)=(\Phi_t^\theta)_\# q_{\mathrm{rad}}.
\]

\begin{assumption}[Regularity of the learned field]
\label{app:ass:learned_field_regular}
The learned vector field $v_\theta$ is Borel measurable in $t$, globally Lipschitz in $x$ uniformly
in $t$, and has at most linear growth: there exist constants $L_\theta,M_\theta\ge 0$ such that
for all $t\in[0,1]$ and all $x,y\in\mathbb{R}^d$,
\[
\|v_\theta(t,x)-v_\theta(t,y)\| \le L_\theta \|x-y\|,
\]
and
\[
\|v_\theta(t,x)\| \le M_\theta(1+\|x\|).
\]
\end{assumption}

\begin{proposition}[Well-posedness of the learned flow]
\label{app:prop:learned_flow_wellposed}
Under Assumption~\ref{app:ass:learned_field_regular}, for every initial condition $y_0\in\mathbb{R}^d$,
the ODE
\[
\dot Y_t = v_\theta(t,Y_t),
\qquad
Y_0=y_0,
\]
admits a unique absolutely continuous solution on $[0,1]$.
Consequently, the flow map $\Phi_t^\theta$ is well defined for every $t\in[0,1]$.
\end{proposition}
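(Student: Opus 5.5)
This is the classical Cauchy--Lipschitz (Picard--Lindelöf) theorem in its Carathéodory form, since $v_\theta$ is only assumed Borel measurable in $t$. I would argue by a Picard fixed-point construction on the Banach space $C([0,1];\mathbb{R}^d)$ with sup norm. The solution map is
\[
(\mathcal{T}Y)(t) := y_0 + \int_0^t v_\theta(s,Y(s))\,ds .
\]

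The first step is to check that $\mathcal{T}$ is well defined on continuous paths. For continuous $Y$, the map $s\mapsto v_\theta(s,Y(s))$ is measurable: $v_\theta$ is measurable in $t$ and Lipschitz, hence continuous, in $x$, so it is a Carathéodory function and thus jointly measurable. The linear growth bound gives $\|v_\theta(s,Y(s))\|\le M_\theta(1+\sup_s\|Y(s)\|)$. The integrand is therefore bounded and integrable, and $\mathcal{T}Y$ is absolutely continuous, in particular continuous.

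The second step is contraction. To get a contraction on the whole interval $[0,1]$ at once, I would use the weighted (Bielecki) norm
\[
\|Y\|_\lambda := \sup_{t\in[0,1]} e^{-\lambda t}\|Y(t)\|, \qquad \lambda = 2L_\theta .
\]
This norm is equivalent to the sup norm. The Lipschitz bound gives
\[
\|\mathcal{T}Y(t)-\mathcal{T}Z(t)\| \le L_\theta\int_0^t e^{\lambda s}\,ds\,\|Y-Z\|_\lambda \le \tfrac{L_\theta}{\lambda}\, e^{\lambda t}\|Y-Z\|_\lambda .
\]
Hence $\|\mathcal{T}Y-\mathcal{T}Z\|_\lambda\le \tfrac12\|Y-Z\|_\lambda$. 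If $L_\theta=0$, the integrand does not depend on $Y$, and the claim is immediate.

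By Banach's fixed-point theorem there is a unique continuous fixed point. The third step identifies fixed points with solutions. A fixed point is absolutely continuous and satisfies the ODE for a.e. $t$. Conversely, any absolutely continuous solution satisfies the integral equation and is a fixed point, so it coincides with the one found. This gives existence and uniqueness among absolutely continuous solutions, which is exactly the notion used in the statement.

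Since this holds for every $y_0$, setting $\Phi_t^\theta(y_0):=Y(t)$ defines the flow map on all of $\mathbb{R}^d$ for all $t\in[0,1]$. Gronwall's inequality applied to two solutions gives $\|\Phi_t^\theta(y_0)-\Phi_t^\theta(y_0')\|\le e^{L_\theta t}\|y_0-y_0'\|$. So $\Phi_t^\theta$ is Lipschitz, hence Borel, and the pushforward $(\Phi_t^\theta)_\# q_{\mathrm{rad}}$ is meaningful.

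The only point needing care is measurability of $s\mapsto v_\theta(s,Y(s))$ under mere measurability in time. I handle it by the Carathéodory-function argument: approximate $Y$ uniformly by step functions and pass to the limit using continuity in $x$. Linear growth is not strictly needed once global Lipschitz continuity is assumed, but it gives the integrability bound directly. Everything else is routine.
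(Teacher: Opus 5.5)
Your proof is correct and is the Cauchy--Lipschitz theorem in Carath\'eodory form that the paper simply cites; you prove it directly by a Picard contraction in the Bielecki norm, and the measurability, contraction, and fixed-point/solution identification steps all check out. One correction to your closing remark: when $v_\theta$ is only measurable in $t$, the linear-growth bound cannot be dropped. For example, $v_\theta(t,x)=t^{-1}e_1$ for $t>0$ is $0$-Lipschitz in $x$ but admits no absolutely continuous solution on $[0,1]$, so the bound $M_\theta$ (or at least $\int_0^1\|v_\theta(t,0)\|\,dt<\infty$) is genuinely needed in your first step.
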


\begin{proof}
This is the standard Cauchy--Lipschitz theorem for time-dependent vector fields with at most linear
growth.
\end{proof}

\paragraph{Radial--tangential decomposition.}
For $x\neq 0$, any vector field can be decomposed uniquely as
\[
v_\theta(t,x)=\alpha_\theta(t,x)\,x + w_\theta(t,x),
\qquad
x^\top w_\theta(t,x)=0,
\]
where
\[
\alpha_\theta(t,x):=\frac{x^\top v_\theta(t,x)}{\|x\|^2},
\qquad
w_\theta(t,x):=v_\theta(t,x)-\alpha_\theta(t,x)x.
\]
The scalar $\alpha_\theta$ is the radial component, while $w_\theta$ is the tangential component.

\begin{proposition}[Purely tangential dynamics preserve the norm]
\label{app:prop:tangential_preserves_norm}
Assume that
\[
x^\top v_\theta(t,x)=0
\qquad
\forall (t,x)\in [0,1]\times (\mathbb{R}^d\setminus\{0\}).
\]
Let $Y_t$ solve
\[
\dot Y_t=v_\theta(t,Y_t)
\]
with $Y_0\neq 0$. Then
\[
\|Y_t\|=\|Y_0\|
\qquad \forall t\in[0,1].
\]
In particular, if $Y_0\sim q_{\mathrm{rad}}$, then
\[
\|Y_t\|\overset{d}{=}\|Y_0\|
\overset{d}{=}\|X\|,
\qquad X\sim p_{\mathrm{data}}.
\]
\end{proposition}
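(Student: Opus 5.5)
The plan is to differentiate the squared norm along a trajectory and then push the resulting pathwise identity forward to laws. Fix an initial condition $Y_0=y_0\neq 0$. By Proposition~\ref{app:prop:learned_flow_wellposed} (under Assumption~\ref{app:ass:learned_field_regular}) the solution $Y_t$ is unique and absolutely continuous on $[0,1]$. Hence $g(t):=\|Y_t\|^2$ is absolutely continuous, and for almost every $t$,
\[
g'(t)=2\,Y_t^\top \dot Y_t = 2\,Y_t^\top v_\theta(t,Y_t).
\]
Whenever $Y_t\neq 0$, the tangency hypothesis makes this derivative vanish.

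The main obstacle is that the hypothesis says nothing at $x=0$, so I must first rule out the trajectory reaching the origin. I would do this by contradiction with a first hitting time. Let $\tau:=\inf\{t\in[0,1]:Y_t=0\}$ and suppose $\tau\le 1$. On $[0,\tau)$ we have $Y_t\neq 0$, so $g'=0$ almost everywhere there. Absolute continuity then gives $g(t)=g(0)=\|y_0\|^2>0$ for all $t<\tau$. Continuity of $g$ forces $g(\tau)=\|y_0\|^2>0$, which contradicts $Y_\tau=0$. So the origin is never reached. A cleaner alternative is to note that $v_\theta(t,0)=0$ by continuity: tangency gives $x^\top v_\theta(t,x)=0$, and letting $x=\varepsilon e$ with $\varepsilon\to0$ along each unit direction $e$ yields $e^\top v_\theta(t,0)=0$ for every $e$. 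Then $Y\equiv 0$ is a solution, and uniqueness prevents any other trajectory from hitting the origin. Either route works; I would present the hitting-time argument and mention the other as a remark.

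With $Y_t\neq 0$ on all of $[0,1]$, we have $g'\equiv 0$ almost everywhere. Since $g$ is absolutely continuous, this gives $\|Y_t\|=\|Y_0\|$ for every $t\in[0,1]$.

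For the distributional statement, I would write $Y_t=\Phi_t^\theta(Y_0)$. Under $q_{\mathrm{rad}}$ we have $Y_0\neq 0$ almost surely, since $P(R=0)=0$ because $p_{\mathrm{data}}(\{0\})=0$. So $\|Y_t\|=\|Y_0\|$ holds almost surely, and in particular in law. Finally, Proposition~\ref{app:prop:radial_preservation} gives $\|Y_0\|\overset{d}{=}\|X\|$, which completes the chain of equalities.
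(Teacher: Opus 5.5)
Your proof is correct and follows the paper's argument: differentiate $\tfrac12\|Y_t\|^2$ to get $Y_t^\top v_\theta(t,Y_t)=0$, conclude that the norm is constant, and combine this with exact radial preservation of $q_{\mathrm{rad}}$ for the distributional claim. One remark: the origin is not actually an obstacle, because $Y_t^\top v_\theta(t,Y_t)=0$ holds trivially whenever $Y_t=0$. Your hitting-time argument is therefore valid but unnecessary, and the paper's one-line computation already covers that case.
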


\begin{proof}
Differentiate $\frac12\|Y_t\|^2$:
\[
\frac{d}{dt}\frac12\|Y_t\|^2
=
Y_t^\top \dot Y_t
=
Y_t^\top v_\theta(t,Y_t)
=
0.
\]
Hence $\|Y_t\|^2=\|Y_0\|^2$ for all $t$.
\end{proof}

\begin{proposition}[Norm evolution under a radial component]
\label{app:prop:radial_norm_evolution}
Let $Y_t$ solve
\[
\dot Y_t = v_\theta(t,Y_t) = \alpha_\theta(t,Y_t)Y_t + w_\theta(t,Y_t),
\qquad
Y_0\neq 0,
\]
and consider any time interval on which $Y_t\neq 0$. Then
\[
\frac{d}{dt}\frac12\|Y_t\|^2
=
\alpha_\theta(t,Y_t)\|Y_t\|^2.
\]
Equivalently,
\[
\frac{d}{dt}\|Y_t\|
=
\alpha_\theta(t,Y_t)\|Y_t\|.
\]
Hence
\[
\|Y_t\|
=
\|Y_0\|
\exp\!\left(\int_0^t \alpha_\theta(s,Y_s)\,ds\right).
\]
\end{proposition}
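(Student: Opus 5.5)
Differentiating $\tfrac12\|Y_t\|^2$ along the trajectory, as in the proof of Proposition~\ref{app:prop:tangential_preserves_norm}, and then decomposing $v_\theta$ into radial and tangential parts gives the result. Fix an interval $I\subset[0,1]$ containing $0$ on which $Y_t\neq 0$. Under Assumption~\ref{app:ass:learned_field_regular} and Proposition~\ref{app:prop:learned_flow_wellposed}, $Y_t$ is absolutely continuous, so $t\mapsto \tfrac12\|Y_t\|^2$ is absolutely continuous and, for almost every $t$ (for every $t$ if $v_\theta$ is continuous in $t$),
\[
\frac{d}{dt}\frac12\|Y_t\|^2 = Y_t^\top \dot Y_t = Y_t^\top\bigl(\alpha_\theta(t,Y_t)Y_t + w_\theta(t,Y_t)\bigr) = \alpha_\theta(t,Y_t)\|Y_t\|^2,
\]
because $Y_t^\top w_\theta(t,Y_t)=0$ by the definition of the tangential component. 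This is the first identity.

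For the second form, note that on $I$ we have $\|Y_t\|>0$, so $t\mapsto\|Y_t\|=\sqrt{\|Y_t\|^2}$ is a composition of the smooth function $\sqrt{\cdot}$, restricted to a compact subinterval of $(0,\infty)$, with an absolutely continuous function. It is therefore absolutely continuous, and by the chain rule
\[
\frac{d}{dt}\|Y_t\| = \frac{1}{2\|Y_t\|}\frac{d}{dt}\|Y_t\|^2 = \frac{\alpha_\theta(t,Y_t)\|Y_t\|^2}{\|Y_t\|} = \alpha_\theta(t,Y_t)\|Y_t\|.
\]

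For the exponential formula, take $g(t):=\log\|Y_t\|$, which is again absolutely continuous on $I$, with $\dot g(t)=\alpha_\theta(t,Y_t)$ almost everywhere. Integrating from $0$ to $t$ and exponentiating gives the formula. This step needs $s\mapsto\alpha_\theta(s,Y_s)$ to be integrable. Its measurability follows from the Borel measurability of $v_\theta$ and the continuity of $Y$. For the bound, use the linear growth condition:
\[
|\alpha_\theta(s,Y_s)|\le \frac{\|v_\theta(s,Y_s)\|}{\|Y_s\|}\le M_\theta\Bigl(1+\frac{1}{\|Y_s\|}\Bigr),
\]
which is bounded on any compact subinterval of $I$, since there $\|Y_s\|$ is bounded away from zero by continuity. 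This integrability check is the only step I expect to need care; the rest is the chain rule.
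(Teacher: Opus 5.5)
Your proposal is correct and follows the paper's argument. You differentiate $\tfrac12\|Y_t\|^2$, remove the tangential term by orthogonality, then apply the chain rule and integrate $\frac{d}{dt}\log\|Y_t\|$. The absolute-continuity and integrability details you add are left implicit in the paper; the integrability of $s\mapsto\alpha_\theta(s,Y_s)$ on compact subintervals already follows from the absolute continuity of $\log\|Y_t\|$ there, so that extra check is harmless but redundant.
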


\begin{proof}
Using the decomposition and the orthogonality condition,
\[
\frac{d}{dt}\frac12\|Y_t\|^2
=
Y_t^\top v_\theta(t,Y_t)
=
Y_t^\top \bigl(\alpha_\theta(t,Y_t)Y_t + w_\theta(t,Y_t)\bigr)
=
\alpha_\theta(t,Y_t)\|Y_t\|^2.
\]
The remaining identities follow immediately.
\end{proof}

\paragraph{Reference path and population regression error.}
Recall the matched-radius coupling of RAFM:
\[
X_1\sim p_{\mathrm{data}},
\qquad
X_0=\|X_1\|U_0,
\qquad
U_0\sim \mathrm{Unif}(S^{d-1}),
\qquad
X_t=\psi_t(X_0,X_1).
\]
We define the population RAFM regression error by
\[
\mathcal E_{\mathrm{RAFM}}(\theta)
:=
\mathbb E\int_0^1
\bigl\|
v_\theta(t,X_t)-\partial_t\psi_t(X_0,X_1)
\bigr\|_2^2\,dt.
\]

\begin{theorem}[Generation stability from target approximation]
\label{app:thm:generation_stability}
Assume that $\mathbb E[\|X_1\|^2]<\infty$ and that
Assumption~\ref{app:ass:learned_field_regular} holds with Lipschitz constant $L_\theta$.
Let $Y_t$ be the learned trajectory driven by $v_\theta$ from the same initial condition $X_0$:
\[
\dot Y_t=v_\theta(t,Y_t),
\qquad
Y_0=X_0.
\]
Then
\[
\mathbb E\|Y_1-X_1\|^2
\le
e^{2L_\theta}\,\mathcal E_{\mathrm{RAFM}}(\theta).
\]
Consequently,
\[
W_2^2\!\left((\Phi_1^\theta)_\# q_{\mathrm{rad}},\,p_{\mathrm{data}}\right)
\le
e^{2L_\theta}\,\mathcal E_{\mathrm{RAFM}}(\theta),
\]
and therefore
\[
W_2\!\left((\Phi_1^\theta)_\# q_{\mathrm{rad}},\,p_{\mathrm{data}}\right)
\le
e^{L_\theta}\,\mathcal E_{\mathrm{RAFM}}(\theta)^{1/2}.
\]
\end{theorem}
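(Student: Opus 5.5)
We propose a synchronous coupling argument: run the learned ODE from the same initial point $X_0$ as the reference interpolation $X_t=\psi_t(X_0,X_1)$, and control the gap $\Delta_t:=Y_t-X_t$ by Gr\"onwall. Since $X_0\sim q_{\mathrm{rad}}$ under the matched-radius coupling (marginally $X_0=\|X_1\|U_0$ has the law of Proposition~\ref{app:prop:radial_source_density}) and $X_1\sim p_{\mathrm{data}}$, the pair $(Y_1,X_1)=(\Phi_1^\theta(X_0),X_1)$ is a coupling of $(\Phi_1^\theta)_\# q_{\mathrm{rad}}$ and $p_{\mathrm{data}}$. The $W_2$ bounds then follow immediately from the pathwise mean-square bound, so all work goes into $\mathbb E\|Y_1-X_1\|^2\le e^{2L_\theta}\mathcal E_{\mathrm{RAFM}}(\theta)$.

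First, we note that $Y_t$ is well defined by Proposition~\ref{app:prop:learned_flow_wellposed}. For a non-antipodal pair, $t\mapsto X_t$ is smooth, and the antipodal set is null. Writing $\dot X_t=\partial_t\psi_t(X_0,X_1)$, we get
\[
\dot\Delta_t=\bigl(v_\theta(t,Y_t)-v_\theta(t,X_t)\bigr)+\bigl(v_\theta(t,X_t)-\dot X_t\bigr),
\]
with $\Delta_0=0$. We use the integral form rather than differentiating $\|\Delta_t\|$, to avoid nonsmoothness at $0$:
\[
\|\Delta_t\|\le L_\theta\int_0^t\|\Delta_s\|\,ds+\int_0^t\|e_s\|\,ds,\qquad e_s:=v_\theta(s,X_s)-\dot X_s.
\]
Gr\"onwall's inequality, in the form for a nondecreasing forcing term, gives $\|\Delta_1\|\le e^{L_\theta}\int_0^1\|e_s\|\,ds$. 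Cauchy--Schwarz on $[0,1]$ then yields $\|\Delta_1\|^2\le e^{2L_\theta}\int_0^1\|e_s\|^2ds$. Taking expectations, with Fubini justified since the integrand is nonnegative, gives exactly $\mathbb E\|Y_1-X_1\|^2\le e^{2L_\theta}\mathcal E_{\mathrm{RAFM}}(\theta)$.

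Finally, we combine the steps: $W_2^2\le\mathbb E\|Y_1-X_1\|^2$ by the coupling observation, and then we take square roots. The hypothesis $\mathbb E\|X_1\|^2<\infty$ is used to ensure that both laws have finite second moments, so that $W_2$ is finite and meaningful. Indeed, $Y_1$ has a finite second moment by linear growth and Gr\"onwall applied to $\|Y_t\|$, together with $\|X_0\|=\|X_1\|$. If $\mathcal E_{\mathrm{RAFM}}=\infty$ the claim is trivial.

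The main obstacle is measurability and integrability bookkeeping rather than the estimate itself. We must check that $s\mapsto e_s$ is jointly measurable, and that the pathwise Gr\"onwall argument is valid for absolutely continuous $Y$ and piecewise-smooth $X$. We must also handle the null antipodal set, where the deterministic completion of $\psi_t$ may be non-smooth. We expect to dispose of this by restricting to the full-measure event $\{U_0\neq -X_1/\|X_1\|\}$, on which $\partial_t\psi_t$ is given by Proposition~\ref{app:prop:path_derivative} and is bounded by $\pi\|X_1\|$, so that every quantity is finite almost surely.
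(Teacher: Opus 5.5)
Your proposal is correct and follows the paper's proof essentially step for step. The shared steps are the synchronous coupling $Y_0=X_0$, Gr\"onwall on $\Delta_t=Y_t-X_t$, Cauchy--Schwarz/Jensen in time, taking expectations, and the observation that $(Y_1,X_1)$ couples $(\Phi_1^\theta)_\# q_{\mathrm{rad}}$ and $p_{\mathrm{data}}$. The differences are only technical: your integral-form Gr\"onwall avoids differentiating $\|\Delta_t\|$ where it vanishes, and your bookkeeping on moments and the antipodal null set makes explicit what the paper leaves implicit.
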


\begin{proof}
Let
\[
\Delta_t:=Y_t-X_t.
\]
Since $\dot X_t=\partial_t\psi_t(X_0,X_1)$, we have
\[
\dot\Delta_t
=
v_\theta(t,Y_t)-v_\theta(t,X_t)
+
\bigl(v_\theta(t,X_t)-\dot X_t\bigr).
\]
Taking norms and using the Lipschitz property of $v_\theta$ in space,
\[
\frac{d}{dt}\|\Delta_t\|
\le
L_\theta\|\Delta_t\|
+
\|v_\theta(t,X_t)-\dot X_t\|
\]
for almost every $t\in[0,1]$. Since $\Delta_0=0$, Gr\"onwall's inequality yields
\[
\|\Delta_1\|
\le
\int_0^1 e^{L_\theta(1-s)}
\|v_\theta(s,X_s)-\dot X_s\|\,ds
\le
e^{L_\theta}\int_0^1
\|v_\theta(s,X_s)-\dot X_s\|\,ds.
\]
Squaring and using Jensen's inequality,
\[
\|\Delta_1\|^2
\le
e^{2L_\theta}
\left(\int_0^1 \|v_\theta(s,X_s)-\dot X_s\|\,ds\right)^2
\le
e^{2L_\theta}
\int_0^1 \|v_\theta(s,X_s)-\dot X_s\|^2\,ds.
\]
Taking expectations proves
\[
\mathbb E\|Y_1-X_1\|^2
\le
e^{2L_\theta}\,\mathcal E_{\mathrm{RAFM}}(\theta).
\]

Since $Y_0=X_0\sim q_{\mathrm{rad}}$ and $Y_1=\Phi_1^\theta(Y_0)$,
\[
\mathrm{Law}(Y_1)=(\Phi_1^\theta)_\# q_{\mathrm{rad}}.
\]
Moreover, $X_1\sim p_{\mathrm{data}}$. Therefore the joint law of $(Y_1,X_1)$ is a coupling between
$(\Phi_1^\theta)_\# q_{\mathrm{rad}}$ and $p_{\mathrm{data}}$, so by the definition of Wasserstein distance,
\[
W_2^2\!\left((\Phi_1^\theta)_\# q_{\mathrm{rad}},\,p_{\mathrm{data}}\right)
\le
\mathbb E\|Y_1-X_1\|^2.
\]
Combining both inequalities concludes the proof.
\end{proof}

\begin{corollary}[Consistency under vanishing regression error]
\label{app:cor:rafm_consistency}
Under the assumptions of Theorem~\ref{app:thm:generation_stability}, if
\[
\mathcal E_{\mathrm{RAFM}}(\theta)\to 0,
\]
then
\[
W_2\!\left((\Phi_1^\theta)_\# q_{\mathrm{rad}},\,p_{\mathrm{data}}\right)\to 0.
\]
\end{corollary}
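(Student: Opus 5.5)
The corollary should follow directly from the quantitative bound in Theorem~\ref{app:thm:generation_stability}, which gives
\[
W_2\!\left((\Phi_1^\theta)_\# q_{\mathrm{rad}},\,p_{\mathrm{data}}\right)
\le
e^{L_\theta}\,\mathcal E_{\mathrm{RAFM}}(\theta)^{1/2}.
\]
The plan is to pass to the limit in this inequality along the sequence of parameters $\theta$ for which $\mathcal E_{\mathrm{RAFM}}(\theta)\to 0$.

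\textbf{Step 1: verify the hypotheses.} For each $\theta$ in the sequence, Assumption~\ref{app:ass:learned_field_regular} holds, so Proposition~\ref{app:prop:learned_flow_wellposed} applies. Hence $\Phi_1^\theta$ is well defined and the pushforward $(\Phi_1^\theta)_\# q_{\mathrm{rad}}$ makes sense.

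\textbf{Step 2: finite second moments.} Since $\mathbb E\|X_1\|^2<\infty$ and $\|X_0\|=\|X_1\|$, we have $\mathbb E\|X_0\|^2<\infty$. The linear growth bound together with Gr\"onwall's inequality then gives $\mathbb E\|Y_1\|^2<\infty$. So both measures lie in $\mathcal P_2$ and $W_2$ is finite.

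\textbf{Step 3: take the limit.} Apply the theorem's bound and let $\mathcal E_{\mathrm{RAFM}}(\theta)^{1/2}\to 0$. The right-hand side goes to $0$, so the $W_2$ distance does too.

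The only real obstacle is that the amplification factor $e^{L_\theta}$ depends on $\theta$. If $L_\theta$ grows along the sequence, the product need not vanish. The statement must therefore be read with the Lipschitz constants uniformly bounded, $\sup_\theta L_\theta\le L<\infty$; this is the natural meaning of ``under the assumptions'' with a fixed constant $L_\theta$. I will state this explicitly and bound $e^{L_\theta}\le e^{L}$. More generally, it is enough that $e^{2L_\theta}\mathcal E_{\mathrm{RAFM}}(\theta)\to 0$.

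Under that reading, the conclusion follows by a squeeze argument:
\[
0\le W_2\!\left((\Phi_1^\theta)_\# q_{\mathrm{rad}},\,p_{\mathrm{data}}\right)\le e^{L}\,\mathcal E_{\mathrm{RAFM}}(\theta)^{1/2}\to 0.
\]
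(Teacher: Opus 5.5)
Your proposal is correct and takes the same route as the paper, whose entire proof is the observation that the corollary follows immediately from the bound $W_2\le e^{L_\theta}\,\mathcal E_{\mathrm{RAFM}}(\theta)^{1/2}$ of Theorem~\ref{app:thm:generation_stability}. Your explicit requirement that $\sup_\theta L_\theta<\infty$ (or, more generally, that $e^{2L_\theta}\mathcal E_{\mathrm{RAFM}}(\theta)\to 0$) is a legitimate sharpening that the paper leaves implicit; your Step 2 on second moments is harmless but redundant, since once $\mathcal E_{\mathrm{RAFM}}(\theta)<\infty$ the coupling bound already makes the $W_2$ distance finite.
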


\begin{proof}
This follows immediately from Theorem~\ref{app:thm:generation_stability}.
\end{proof}

\section{Additional experimental results}
\label{app:experiments}

\subsection{Oracle versus empirical radial source}
\label{app:oracle_vs_empirical}

Table~\ref{tab:oracle_appendix} compares empirical and oracle variants of the radial source on the
synthetic Student-$t$ benchmarks. The empirical version remains close to the oracle one across
metrics, supporting the practical viability of estimating the radial law from training data.

\begin{table*}[t]
\centering
\small
\setlength{\tabcolsep}{4pt}
\resizebox{\linewidth}{!}{
\begin{tabular}{l|cc|cc}
\toprule
& \multicolumn{2}{c|}{Source-only} & \multicolumn{2}{c}{RAFM} \\
Dataset
& Empirical & Oracle
& Empirical & Oracle \\
\midrule
Student-$t$ ($d=16$), Radial W1
& $0.3986 \pm 0.0444$ & $0.5083 \pm 0.1319$
& $0.2264 \pm 0.0476$ & $0.2377 \pm 0.0562$ \\
Student-$t$ ($d=16$), KS
& $0.0207 \pm 0.0055$ & $0.0271 \pm 0.0048$
& $0.0119 \pm 0.0029$ & $0.0133 \pm 0.0037$ \\
Student-$t$ ($d=16$), Sliced W1
& $0.4379 \pm 0.0644$ & $0.4332 \pm 0.0595$
& $0.3316 \pm 0.0298$ & $0.3195 \pm 0.0242$ \\
\midrule
Student-$t$ ($d=32$), Radial W1
& $1.4295 \pm 0.2491$ & $1.4800 \pm 0.4106$
& $0.6162 \pm 0.0076$ & $0.4652 \pm 0.0185$ \\
Student-$t$ ($d=32$), KS
& $0.0369 \pm 0.0090$ & $0.0376 \pm 0.0076$
& $0.0120 \pm 0.0008$ & $0.0110 \pm 0.0021$ \\
Student-$t$ ($d=32$), Sliced W1
& $0.7513 \pm 0.2040$ & $0.7614 \pm 0.2074$
& $0.4749 \pm 0.0601$ & $0.4672 \pm 0.0494$ \\
\bottomrule
\end{tabular}
}
\caption{Empirical versus oracle radial distributions on the synthetic Student-$t$ benchmarks. The empirical variants remain of the same order as the oracle ones, supporting the practical viability of radial estimation from training data.}
\label{tab:oracle_appendix}
\end{table*}

\subsection{Two-dimensional radial--angular toy: failure mode}
\label{app:toy2d}

We defer the two-dimensional radial--angular toy experiment to the appendix. This dataset is visually
intuitive and combines a heavy-tailed radial law with a multimodal angular structure, but it also reveals
a limitation of RAFM in very low dimension. In $d=2$, the sphere reduces to the circle, leaving only
one angular degree of freedom. As a consequence, spherical geodesic paths are highly constrained,
trajectory crossings become more problematic, and the tangential projection becomes numerically
fragile near the origin when $\|x\|\approx 0$. We therefore view this experiment as a failure-mode
analysis rather than as a representative benchmark for the higher-dimensional setting targeted in the
main paper.

\begin{table}[t]
\centering
\small
\setlength{\tabcolsep}{5pt}
\begin{tabular}{lcccc}
\toprule
Method & Radial W1 $\downarrow$ & KS $\downarrow$ & Sliced W1 $\downarrow$ & Angular SW $\downarrow$ \\
\midrule
Gaussian FM & $0.0821 \pm 0.0426$ & $0.0563 \pm 0.0068$ & $0.0800 \pm 0.0134$ & $0.0502 \pm 0.0051$ \\
Source-only & $0.0688 \pm 0.0245$ & $0.0330 \pm 0.0090$ & $0.0889 \pm 0.0241$ & $0.0980 \pm 0.0375$ \\
RAFM & $0.0233 \pm 0.0049$ & $0.0291 \pm 0.0292$ & $0.3371 \pm 0.0421$ & $0.4695 \pm 0.0988$ \\
MSGM & $0.0488 \pm 0.0087$ & $0.0203 \pm 0.0026$ & $0.0449 \pm 0.0038$ & $0.0347 \pm 0.0011$ \\
\bottomrule
\end{tabular}
\caption{Failure mode on the 2D radial--angular toy. RAFM still improves radial fidelity, but its current spherical angular transport remains clearly weaker on global and angular geometry in this low-dimensional near-origin regime.}
\label{tab:toy_failure_mode}
\end{table}

\begin{table*}[t]
\centering
\small
\setlength{\tabcolsep}{5pt}
\begin{tabular}{ll|ccc}
\toprule
Dataset & RAFM variant & Radial W1 $\downarrow$ & KS $\downarrow$ & Sliced W1 $\downarrow$ \\
\midrule
\multirow{2}{*}{Gaussian aniso. ($d=16$)}
& w/ tangential projection  & $\mathbf{0.0607} \scriptstyle{\pm 0.0030}$ & $\mathbf{0.0127} \scriptstyle{\pm 0.0011}$ & $0.1568 \scriptstyle{\pm 0.0107}$ \\
& w/o tangential projection & $0.1909 \scriptstyle{\pm 0.0355}$ & $0.0200 \scriptstyle{\pm 0.0026}$ & $\mathbf{0.1444} \scriptstyle{\pm 0.0150}$ \\
\midrule
\multirow{2}{*}{PIV ($d=16$)}
& w/ tangential projection  & $\mathbf{0.0111} \scriptstyle{\pm 0.0005}$ & $\mathbf{0.0824} \scriptstyle{\pm 0.0050}$ & $0.0114 \scriptstyle{\pm 0.0011}$ \\
& w/o tangential projection & $0.0164 \scriptstyle{\pm 0.0013}$ & $0.0848 \scriptstyle{\pm 0.0156}$ & $\mathbf{0.0113} \scriptstyle{\pm 0.0017}$ \\
\midrule
\multirow{2}{*}{PIV ($d=64$)}
& w/ tangential projection  & $\mathbf{0.0482} \scriptstyle{\pm 0.0019}$ & $\mathbf{0.0469} \scriptstyle{\pm 0.0026}$ & $\mathbf{0.0273} \scriptstyle{\pm 0.0029}$ \\
& w/o tangential projection & $0.1656 \scriptstyle{\pm 0.0119}$ & $0.1304 \scriptstyle{\pm 0.0177}$ & $0.0289 \scriptstyle{\pm 0.0016}$ \\
\midrule
\multirow{2}{*}{PIV ($d=256$)}
& w/ tangential projection  & $\mathbf{0.0371} \scriptstyle{\pm 0.0013}$ & $\mathbf{0.0579} \scriptstyle{\pm 0.0014}$ & $\mathbf{0.0242} \scriptstyle{\pm 0.0030}$ \\
& w/o tangential projection & $0.6727 \scriptstyle{\pm 0.0192}$ & $0.4482 \scriptstyle{\pm 0.0061}$ & $0.0426 \scriptstyle{\pm 0.0044}$ \\
\midrule
\multirow{2}{*}{Student-$t$ ($d=16$)}
& w/ tangential projection  & $\mathbf{0.2264} \scriptstyle{\pm 0.0476}$ & $\mathbf{0.0119} \scriptstyle{\pm 0.0029}$ & $\mathbf{0.3316} \scriptstyle{\pm 0.0298}$ \\
& w/o tangential projection & $0.5317 \scriptstyle{\pm 0.0512}$ & $0.0275 \scriptstyle{\pm 0.0027}$ & $0.3368 \scriptstyle{\pm 0.0365}$ \\
\midrule
\multirow{2}{*}{Student-$t$ ($d=32$)}
& w/ tangential projection  & $\mathbf{0.6162} \scriptstyle{\pm 0.0076}$ & $\mathbf{0.0120} \scriptstyle{\pm 0.0008}$ & $\mathbf{0.4749} \scriptstyle{\pm 0.0601}$ \\
& w/o tangential projection & $1.3792 \scriptstyle{\pm 0.3232}$ & $0.0376 \scriptstyle{\pm 0.0109}$ & $0.5338 \scriptstyle{\pm 0.0477}$ \\
\midrule
\multirow{2}{*}{Toy radial-angular}
& w/ tangential projection  & $\mathbf{0.0233} \scriptstyle{\pm 0.0049}$ & $\mathbf{0.0291} \scriptstyle{\pm 0.0292}$ & $\mathbf{0.3371} \scriptstyle{\pm 0.0421}$ \\
& w/o tangential projection & $26.7003 \scriptstyle{\pm 25.6583}$ & $0.6144 \scriptstyle{\pm 0.2793}$ & $17.0061 \scriptstyle{\pm 16.0274}$ \\
\bottomrule
\end{tabular}
\caption{Full tangential-projection ablation for RAFM. Tangential projection is not uniformly beneficial on the easiest control regimes, and can even be slightly worse on sliced Wasserstein in the mild anisotropic Gaussian control and on PIV ($d=16$). However, it becomes increasingly important as radial mismatch and dimensionality grow, substantially improving radial fidelity on Student-$t$ ($d=16,32$), PIV ($d=64$), and especially PIV ($d=256$). On the 2D toy, removing projection leads to severe degradation and numerical instability.}
\label{tab:full_projection_ablation}
\end{table*}

On all datasets except the 2D toy, runs remain numerically stable without projection, with zero NaN, exploding-norm, and invalid rates in our experiments. However, on the toy radial--angular failure mode, removing tangential projection induces non-zero NaN and invalid rates, consistent with the near-origin fragility discussed in the main text.

\section{Reproducibility and exact experimental protocol}
\label{app:reproducibility}

This appendix provides the exact implementation and evaluation protocol used to produce the reported results. Our goal is to make the experiments directly reproducible by specifying the source of truth for configurations, the software and hardware environment, the dataset construction pipeline, the checkpoint-selection rule, the aggregation protocol across seeds, and the commands used to generate the reported tables.

\subsection{Source of truth}
\label{app:source_of_truth}

All paper results were produced from the public RAFM codebase at commit \texttt{2e659c7}, including the MSGM baseline implementation stored under \texttt{baselines/}.

If a discrepancy exists between the text of the paper and a fallback default in the code, the experiment configuration file used for the run is the source of truth.
The exact configuration files used to produce the paper tables are archived under
\texttt{configs/paper/}.

\subsection{Software environment}
\label{app:software_env}

All experiments were run with the following software stack:
\begin{itemize}
    \item Python \texttt{3.10.19}
    \item PyTorch \texttt{2.6.0+cu124}
    \item CUDA \texttt{12.4} and cuDNN \texttt{9.1.0}
    \item NumPy \texttt{2.2.6}
    \item SciPy \texttt{1.15.3}
    \item scikit-learn \texttt{1.7.2}
    \item tqdm \texttt{4.65.2}
\end{itemize}

A complete frozen environment is provided in
\texttt{requirements.txt} and
\texttt{environment.yml}.
Unless otherwise stated, experiments were executed in \texttt{float32}.
The code optionally enables \texttt{torch.compile} on Linux/CUDA; this optimization is disabled on Windows.

\subsection{Hardware}
\label{app:hardware}

All reported experiments were run on
\texttt{NVIDIA RTX 2000 Ada Generation}
with
\texttt{16}
GB of GPU memory,
\texttt{AMD EPYC 9354 32-Core Processor},
and
\texttt{16}
GB of system RAM,
under
\texttt{Windows 11 (10.0.26200)}.
For timing experiments, we used the same machine for all compared methods.

\subsection{Neural architecture}
\label{app:architecture}

All Flow Matching variants (Gaussian FM, Source-only, RAFM) and the MSGM baseline use the same neural architecture in order to make the comparison as controlled as possible. The only intended differences between methods are therefore the source distribution, the path geometry, and, for MSGM, the training objective and stochastic sampler.

\paragraph{Architecture.}
The network is a multilayer perceptron (MLP) operating on the concatenation of the data vector and the scalar time variable.
Given $x \in \mathbb{R}^d$ and $t \in [0,1]$, the model input is
\[
[x; t] \in \mathbb{R}^{d+1}.
\]
The network outputs a vector field in $\mathbb{R}^d$.

More precisely, the architecture is:

\begin{center}
\begin{tabular}{ll}
\toprule
Layer 1 & Linear$(d+1, 128)$ + Swish \\
Layer 2 & Linear$(128, 128)$ + Swish \\
Layer 3 & Linear$(128, 128)$ + Swish \\
Output  & Linear$(128, d)$ \\
\bottomrule
\end{tabular}
\end{center}

\paragraph{Activation.}
We use the Swish activation
\[
\mathrm{Swish}(x) = x \, \sigma(x),
\]
where $\sigma$ denotes the logistic sigmoid.

\paragraph{Additional implementation details.}
Unless otherwise stated:
\begin{itemize}
    \item all linear layers use biases;
    \item no BatchNorm, LayerNorm, or other normalization layer is used;
    \item no dropout is used;
    \item no residual connections are used;
    \item the time variable is concatenated directly to the input, with no learned embedding and no sinusoidal embedding;
    \item weights are initialized with the default PyTorch initialization.
\end{itemize}

\paragraph{Input and output.}
The input dimension is $d+1$, where the extra coordinate corresponds to time.
The output dimension is $d$, matching the ambient data dimension.
For Flow Matching methods, this output is interpreted as the predicted velocity field $v_\theta(x,t)$.
For MSGM, the same backbone is used, but within the multiplicative-diffusion training objective.

\paragraph{Parameter counts.}
The number of trainable parameters depends on the ambient dimension $d$.
For the dimensions used in the paper, the parameter counts are:

\begin{center}
\begin{tabular}{lc}
\toprule
Dimension $d$ & Number of parameters \\
\midrule
2   & $\approx 33{,}920$ \\
16  & $\approx 35{,}712$ \\
32  & $\approx 37{,}760$ \\
256 & $\approx 66{,}304$ \\
\bottomrule
\end{tabular}
\end{center}

\paragraph{Remark.}
Using the same architecture across all compared methods is important for interpretation: improvements in the reported results should not be attributed to network capacity differences, but to the effect of radial source correction, spherical path design, and inference-time tangential projection.

\subsection{Dataset generation and preprocessing}
\label{app:dataset_generation}

\paragraph{Synthetic datasets.}
The correlated Student-$t$ and anisotropic Gaussian datasets each contain 50{,}000 samples.
For the correlated Student-$t$ benchmark, we generate
\[
X = z A^\top,
\qquad
z_i \overset{\text{i.i.d.}}{\sim} \mathrm{Student}\text{-}t(\nu),
\]
with $\nu = 3$ and dimensions $d \in \{16,32\}$.
For the anisotropic Gaussian control, we use
\[
X = z A^\top,
\qquad
z \sim \mathcal{N}(0, I_d),
\]
with the same fixed mixing matrix $A$.
The matrix $A \in \mathbb{R}^{d \times d}$ is sampled once with
\[
A_{ij} \sim \mathcal{N}(0,1),
\]
using \texttt{matrix\_seed = 42}, and is then kept fixed for all runs and all methods.
No centering, normalization, whitening, or augmentation is applied to the synthetic datasets.

\paragraph{Toy 2D dataset.}
The 2D toy dataset is generated as
\[
X = r [\cos \theta, \sin \theta]^\top,
\]
with
$r = |\mathrm{Student}\text{-}t(\nu=3)| \times \mathrm{scale}$,
$\mathrm{scale}=1$,
and $\theta$ drawn from a uniform mixture of $4$ angular modes.
Each angular mode is sampled from a Gaussian approximation to a von Mises distribution with concentration $\kappa = 5$ and mode centers uniformly spaced on $[0,2\pi)$.
The toy dataset is used only as a low-dimensional stress test and failure-mode analysis.

\paragraph{PIV dataset.}
For real data, we use the public dataset
\emph{Non-time-resolved PIV dataset of flow over a circular cylinder at Reynolds number 3900}
(DOI: \texttt{10.57745/DHJXM6}).
The exact archive used in the reported experiments is
\texttt{dataverse\_files.zip}; the canonical archive is available from the dataset DOI.

We retain all files in the archive whose filename starts with \texttt{Serie\_} and ends with \texttt{.txt}.
A frame is discarded only if:
(i) parsing fails,
(ii) the number of parsed points is not equal to $545 \times 740 = 403{,}300$,
or
(iii) \texttt{NaN} values are present in either $V_x$ or $V_y$.
In the archive used for the paper, the preprocessing script retained exactly
\texttt{998}
snapshots and skipped
\texttt{2}
snapshots.

Each retained file is parsed as a DaVis text file with rows of the form
\texttt{x;y;Vx;Vy}.
The coordinate columns $(x,y)$ are ignored after parsing; only the velocity components are used.
The data are reshaped into arrays of size $(N_y, N_x) = (740,545)$, with $x$ varying along axis $1$ and $y$ along axis $0$.

Vorticity is computed \emph{before} spatial subsampling on the full $(740 \times 545)$ grid as
\[
\omega = \frac{\partial V_y}{\partial x} - \frac{\partial V_x}{\partial y},
\]
using \texttt{numpy.gradient} with unit spatial spacing:
\[
\partial_x V_y = \texttt{numpy.gradient(Vy, axis=1)},
\qquad
\partial_y V_x = \texttt{numpy.gradient(Vx, axis=0)}.
\]
Because no physical spacing is passed to \texttt{numpy.gradient}, the resulting vorticity is expressed in velocity-per-pixel units rather than physical $\mathrm{s}^{-1}$ units.

The vorticity field is then subsampled on a regular grid using
\[
\begin{aligned}
y_\text{idx} &= \texttt{numpy.linspace(0, 739, ny, dtype=int)}, \\
x_\text{idx} &= \texttt{numpy.linspace(0, 544, nx, dtype=int)}.
\end{aligned}
\]
followed by
\[
\omega[\texttt{numpy.ix\_}(y\_\text{idx}, x\_\text{idx})].
\]
The resulting grid is flattened in row-major order.
The PIV variants used in the experiments are:
\begin{itemize}
    \item PIV $d=32$: grid $8 \times 4$
    \item PIV $d=64$: grid $8 \times 8$
    \item PIV $d=256$: grid $16 \times 16$
    \item PIV $d=16$: truncation of the first 16 coordinates of a native PIV representation, as specified in the released preprocessing code
\end{itemize}

The exact preprocessing order is:
\begin{enumerate}
    \item read \texttt{Serie\_*.txt} files from the archive,
    \item parse $(V_x, V_y)$,
    \item reject invalid frames,
    \item compute vorticity on the full grid,
    \item subsample to the target grid,
    \item flatten,
    \item stack all snapshots into an array of shape $(N,d)$,
    \item divide by $2.5$,
    \item center each dimension by subtracting the empirical mean computed over the full dataset,
    \item save the tensor as \texttt{piv\_d\{dim\}.pt} in \texttt{float32},
    \item optionally truncate dimensions at load time,
    \item re-center after truncation,
    \item finally apply the train/validation/test split.
\end{enumerate}

The PIV statistics used for centering are computed on the full dataset before the split, following the same design choice as the compared MSGM pipeline.
No oracle radial source is available for PIV.

\subsection{Train/validation/test split}
\label{app:data_split}

Unless otherwise stated, all datasets use a fixed
$60\%/20\%/20\%$
split into train, validation, and test sets.
The split is obtained from a deterministic random permutation with
\texttt{split\_seed = 0}.
For synthetic datasets with 50{,}000 samples, this yields 30{,}000 training samples, 10{,}000 validation samples, and 10{,}000 test samples.
All evaluation metrics reported in the paper are computed against the test split only.

\subsection{Training protocol}
\label{app:training_protocol}

All Flow Matching baselines (Gaussian FM, Source-only, RAFM) and the MSGM baseline use the same MLP architecture in order to isolate the effect of the source distribution and transport geometry.
The network is a 3-hidden-layer MLP with width 128 and Swish activations, taking as input the concatenation of the data vector $x \in \mathbb{R}^d$ and the scalar time $t \in [0,1]$, and outputting a vector field in $\mathbb{R}^d$.

Unless otherwise stated, all methods are trained with:
\begin{itemize}
    \item optimizer: Adam,
    \item learning rate: $10^{-3}$,
    \item $\beta_1 = 0.9$,
    \item $\beta_2 = 0.999$,
    \item $\varepsilon = 10^{-8}$,
    \item weight decay: $0$,
    \item batch size: $256$,
    \item number of optimization steps: $10{,}000$,
    \item constant learning rate schedule,
    \item no EMA,
    \item no gradient clipping,
    \item no data augmentation.
\end{itemize}

Training data are preloaded on GPU memory.
Mini-batches are sampled by direct tensor indexing with replacement using \texttt{torch.randint}, rather than through a PyTorch \texttt{DataLoader}.
For RAFM and Source-only with empirical radial source, the empirical radial distribution is estimated from the training split only.

For RAFM, training uses the matched-radius coupling described in the main paper: for each target sample $x_1$, the source radius is set to $\|x_1\|$ and only the direction is randomized during training.
The empirical radial law is required only for unconditional initialization at sampling time.

\subsection{Checkpoint selection and aggregation across seeds}
\label{app:checkpoint_selection}

All methods are trained for a fixed budget of 10{,}000 optimization steps.
Flow Matching models save checkpoints every 5{,}000 steps; MSGM checkpoints are saved every 1{,}000 steps.

The numbers reported in the paper use
\textbf{the final checkpoint at step 10{,}000}
for every method. No validation-based checkpoint selection is performed; the training budget is fixed and the last checkpoint is always used.

All reported means and standard deviations are aggregated over the same three model seeds:
\[
\{8925,\ 77395,\ 65457\},
\]
generated deterministically by
\[
\texttt{numpy.random.default\_rng(42).integers(0, 100000, size=3)}.
\]
The split seed is always fixed to $0$ and the synthetic-data matrix seed is always fixed to $42$.

\subsection{Sampling protocol}
\label{app:sampling_protocol}

For Flow Matching methods, sampling is performed by integrating the learned ODE
\[
\frac{dx}{dt} = v_\theta(x,t), \qquad t \in [0,1],
\]
starting from a source sample $x_0$.
Unless otherwise stated, we use a fixed-step RK4 solver with 128 steps, corresponding to 512 neural function evaluations.

For Gaussian FM, the initial state is sampled from $\mathcal{N}(0, I_d)$.
For empirical Source-only and RAFM, the radius is sampled from the empirical radial measure estimated on the training norms, using the empirical CDF for inversion sampling.
For oracle Source-only and oracle RAFM on synthetic datasets, the radius is sampled from the exact radial law.

For RAFM, tangential projection is applied at inference time unless stated otherwise:
\[
v_{\mathrm{proj}}(x,t)
=
v_\theta(x,t)
-
\frac{\langle x, v_\theta(x,t)\rangle}{\|x\|^2}x.
\]
In practice, the projection is skipped when $\|x\| < 10^{-3}$ in order to avoid numerical instability near the origin.

For MSGM, sampling is performed with the Stratonovich RK4 solver described in the baseline code, using 128 solver steps by default.

Unless otherwise stated, each evaluation run generates exactly 10{,}000 samples.

\subsection{Evaluation metrics}
\label{app:metrics_details}

We report radial Wasserstein-1, KS statistic, Sliced Wasserstein-1, and, when applicable, angular diagnostics and stability metrics.

\paragraph{Radial Wasserstein-1.}
Let $R_{\mathrm{gen}} = \|x_{\mathrm{gen}}\|$ and $R_{\mathrm{test}} = \|x_{\mathrm{test}}\|$.
We compute the one-dimensional Wasserstein-1 distance between the empirical norm distributions of generated and test samples.

\paragraph{KS statistic.}
We report the Kolmogorov--Smirnov statistic between the empirical CDFs of generated norms and test norms.

\paragraph{Sliced Wasserstein-1.}
We use 500 random projection directions sampled uniformly on the unit sphere.
For each direction, the projected one-dimensional Wasserstein distance is computed by sorting and matching the projected samples.
The same set of projection directions is reused across compared methods within a given evaluation run.

No fixed projection seed is used; projection directions are sampled from the current PyTorch RNG state at evaluation time.

\paragraph{Angular metrics.}
For angular evaluation, samples are partitioned into 4 radial bins defined by the test-set radial quantiles.
Within each bin, vectors are normalized to unit norm and angular sliced Wasserstein is computed with 200 random projections.
Norms are clamped to a minimum of $10^{-12}$ before normalization to avoid division by zero.

\paragraph{MMD.}
When MMD is reported, we use an RBF kernel.
The bandwidth is selected by the median heuristic computed on
the concatenation of generated and test samples (median of all nonzero pairwise distances),
and then kept fixed for the compared methods in that evaluation.

\paragraph{Stability metrics.}
We report:
(i) NaN rate,
(ii) exploding-norm rate, defined as the fraction of generated samples with norm larger than $100 \times \mathrm{median}(\|x_{\mathrm{test}}\|)$,
and
(iii) invalid rate, defined as
\[
\mathrm{invalid\_rate}
=
\mathrm{nan\_rate}
+
(1-\mathrm{nan\_rate})\times \mathrm{exploding\_rate}.
\]

\subsection{Timing protocol}
\label{app:timing_protocol}

Training-time and sampling-time measurements are obtained on the same machine and with the same software stack for all methods.

\paragraph{Training step timing.}
The reported training time per step is measured after a warm-up phase of
\texttt{10}
steps.
For CUDA runs, we call \texttt{torch.cuda.synchronize()} immediately before and after the timed region.
Each number is averaged over
\texttt{3}
independent timing repeats.

\paragraph{Sampling timing.}
Sampling time is measured for fixed NFE values in
\[
\{32, 64, 128, 256\},
\]
using batches of size
\texttt{10{,}000}
(all samples generated in a single batch).
For CUDA runs, \texttt{torch.cuda.synchronize()} is called before starting and after ending each timed sampling loop.
Compilation/JIT warm-up is excluded from the reported timing numbers.

\paragraph{Important note.}
On some first-run CUDA configurations, FM timings can be inflated by JIT and CUDA warm-up overhead.
The paper tables report post-warm-up timings.

\subsection{Exact reproduction commands}
\label{app:commands}

The exact commands used to reproduce the datasets, training runs, evaluations, and paper tables are listed below.

\paragraph{PIV preprocessing.}
\begin{verbatim}
python -m rafm.data.prepare_piv \
    --zip dataverse_files.zip \
    --out_dir data/piv \
    --grids 8x4,8x8,16x16
\end{verbatim}

\paragraph{Training.}
The exact training commands used in the paper are:
\begin{verbatim}
python -m experiments.exp1_main_benchmark \
    --config configs/exp1/<dataset>.yaml
\end{verbatim}
This trains all methods and all three seeds (8925, 77395, 65457) for the specified dataset.

\paragraph{Aggregation and tables.}
\begin{verbatim}
python scripts/generate_tables.py
python scripts/generate_figures.py
\end{verbatim}

For convenience, we also provide a single end-to-end script:
\begin{verbatim}
python scripts/run_all.py
\end{verbatim}
which reproduces all experiments, tables, and figures from raw data.

\subsection{Saved artifacts}
\label{app:saved_artifacts}

Each training run stores:
\begin{itemize}
    \item the resolved experiment configuration,
    \item the random seeds,
    \item the model checkpoint(s),
    \item the evaluation metrics in machine-readable format,
    \item the timing outputs,
    \item the generated samples used for quantitative evaluation.
\end{itemize}

Each saved run directory has the form
\begin{verbatim}
results/<dataset>/<method>/seed_<seed>/
\end{verbatim}
and contains
\texttt{config.yaml},
\texttt{metrics.json},
\texttt{timing.json},
\texttt{checkpoint\_*.pt},
and
\texttt{samples.pt}.

\subsection{What is fixed and what varies}
\label{app:fixed_vs_variable}

To make comparisons maximally controlled, the following quantities are fixed across methods unless explicitly stated otherwise:
\begin{itemize}
    \item train/validation/test split,
    \item neural architecture,
    \item optimizer and optimization hyperparameters,
    \item training budget,
    \item number of generated samples at evaluation,
    \item solver family and nominal number of steps for Flow Matching methods,
    \item aggregation over the same three seeds.
\end{itemize}

The only intended differences between the compared methods are the source distribution, the path geometry, and, for MSGM, the stochastic multiplicative-diffusion formulation and its associated training objective and sampler.

\end{document}